\newtheorem{theorem}{Theorem}
\newtheorem{lemma}{Lemma}
\newtheorem{proof}{Proof}
\newcommand{\R}{\mathbf{R}}
\newcommand{\G}{\mathcal{G}}
\newcommand{\E}{\mathbb{E}}
\newcommand{\V}{\mathcal{V}}
\newcommand{\N}{\mathcal{N}}
\newcommand{\M}{\mathbf{M}}
\newcommand{\A}{\mathbf{A}}
\newcommand{\x}{\mathbf{x}}
\newcommand{\z}{\mathbf{z}}
\newcommand{\g}{\mathbf{g}}
\newcommand{\s}{\mathbf{s}}
\newcommand{\y}{\mathbf{y}}
\newcommand{\Eta}{\boldsymbol{\eta}}
\newcommand{\I}{\mathbf{I}}
\newcommand{\zeros}{\mathbf{0}}
\renewcommand{\H}{\mathbf{H}}
\renewcommand{\P}{\mathbf{P}}
\renewcommand{\S}{\mathbf{S}}
\newcommand{\smin}{\sigma_{\min}}
\begin{document}
	
	\begin{frontmatter}
		
		\title{Optimal Spatial-Temporal Triangulation for
			\\Bearing-Only Cooperative Motion Estimation}
		
		\thanks[footnoteinfo]{
        This work was supported by National Natural Science Foundation of China (Grant No. 62473320). The paper was not presented at any conference. Corresponding author: S.~Zhao.
		}
		
		\author[zu,wu]{\quad\quad Canlun Zheng}\ead{zhengcanlun@westlake.edu.cn},
        \author[wu]{Yize Mi}\ead{miyize@westlake.edu.cn},
		\author[wu]{Hanqing Guo}\ead{guohanqing@westlake.edu.cn},
		\author[wu]{Huaben Chen}\ead{chenhuaben@westlake.edu.cn},\newline
		\author[sust,pc]{Zhiyun Lin}\ead{linzy@sustech.edu.cn},
		\author[wu]{Shiyu Zhao}\ead{zhaoshiyu@westlake.edu.cn}
		
		\address[zu]{College of Computer Science and Technology, Zhejiang University, Hangzhou, China}
		\address[wu]{WINDY Lab, Department of Artificial Intelligence, Westlake University, Hangzhou, China}
		\address[sust]{School of System Design and Intelligent Manufacturing, Southern University of Science and Technology, Shenzhen, China}
		\address[pc]{Peng Cheng Laboratory, Shenzhen, China}
		
		\begin{keyword}
			cooperative estimation; bearing-only measurements; distributed Kalman filtering; distributed recursive least-squares; vision-based aerial target pursuit.
		\end{keyword}
		
		\begin{abstract}
			Vision-based cooperative motion estimation is an important problem for many multi-robot systems such as cooperative aerial target pursuit. This problem can be formulated as \emph{bearing-only} cooperative motion estimation, where the visual measurement is modeled as a bearing vector pointing from the camera to the target. The conventional approaches for bearing-only cooperative estimation are mainly based on the framework of distributed Kalman filtering (DKF). In this paper, we propose a new optimal bearing-only cooperative estimation algorithm, named spatial-temporal triangulation, based on the method of distributed recursive least squares.
			The design of the algorithm fully incorporates all the available information and the specific triangulation geometric constraint. As a result, the algorithm has superior estimation performance than the state-of-the-art DKF algorithms in terms of both accuracy and convergence speed as verified by numerical simulation.
			We rigorously prove the exponential convergence of the proposed algorithm. Moreover, to verify the effectiveness of the proposed algorithm under practical challenging conditions, we develop a vision-based cooperative aerial target pursuit system, which is the first of such fully autonomous systems up to now to the best of our knowledge.
		\end{abstract}
		
	\end{frontmatter}
	
	\section{Introduction}
	
	This study is motivated by the practical task of vision-based cooperative aerial target pursuit, where multiple micro aerial vehicles (MAVs) use their onboard cameras to detect, locate, and follow an uncooperative target MAV autonomously. This task is inspired by the fascinating bird-catching-bird behaviors in nature \cite{brighton2019hawks}.
	While our previous study has considered the one-to-one case \cite{li2022three,ning2024bearing,ning2024real}, the present works focus on the multiple-to-one case where multiple pursuer MAVs collaborate with each other to pursue one target MAV.
	
	Vision-based cooperative aerial target pursuit is a complex system that involves several interconnected components, such as vision detection, motion estimation, and cooperative control.
	Although there exist studies on cooperative target pursuit \cite{ma2013vision} or pursuit-and-evasion problems \cite{makkapati2019optimal,borra2022reinforcement}, these studies mainly consider control strategies by assuming that the target's motion can be obtained in other ways.
	However, estimating the target's motion is a nontrivial problem. It is also critical to accurately estimate the position and velocity of the target to achieve high-performance autonomous target pursuit.
	
	Vision-based cooperative motion estimation can be formulated as the problem of \emph{bearing-only} cooperative motion estimation, where the visual measurement of a pursuer is modeled as a \emph{bearing vector} pointing from the pursuer to the target. Specifically, when the target has been detected by certain vision algorithms, the bearing vector can be calculated from the target's pixel coordinate and the intrinsic parameters of the camera \cite{vrba2019onboard,li2022three}. By contrast, the range of the target cannot be directly recovered from a single image since the target is uncooperative, and hence, its prior information is unknown. Therefore, it is necessary to study how to estimate the target's motion based merely on bearing measurements obtained from multiple views/images. 
	
	In the bearing-only cooperative estimation problem, the conventional approaches are mainly based on the framework of distributed Kalman filters (DKFs) \cite{nguyen2018instrumental,jiang2023event}.
	However, the DKF-based approaches have several \emph{limitations}.
	First, DKFs are designed for linear systems, but the bearing-only estimation problem is nonlinear. As a result, the estimators designed based on linearized models become sub-optimal and the convergence of the entire networked system becomes difficult to analyze.
	Second, DKFs are designed for general systems and do not consider the unique features of bearing measurements.
	It is important to fully exploit bearing measurements to improve estimation performance further.
	
	In this paper, we adopt the distributed recursive least squares (DRLS) to design the bearing-only cooperative estimators with the inspiration of fusing triangulation into the filter.
	DRLS has some \emph{advantages} compared to DKFs.
	Firstly, it provides greater programmable by allowing customization of the objective function. As a result, we can fully utilize the available information while balancing various performance metrics effectively.
	A well-designed objective function can lead to improved and more interpretable algorithm performance.
	Secondly, a DRLS estimator requires less information shared among the pursuers thereby reducing the communication burden. This aspect is important for inter-MAV systems operating under bandwidth-constrained wireless communication. Thirdly, the convergence of a DRLS estimator can be guaranteed if the objective function is well-designed with appropriate structures and coefficients.
	Theoretical convergence guarantee is crucial for cooperative bearing-only estimation, given the nonlinear and complex networked dynamic problems.
	
	The novelties of this work are summarized as follows.
	
	First, we design a new objective function and derive the corresponding recursive estimation algorithm called \emph{spatial-temporal triangulation} (STT).
	The design of the objective function fully exploits all the available information and specific geometry constraints of bearing-only cooperative estimation.
	The dynamic model is also incorporated into the objective function to ensure smooth estimation.
	The proposed STT algorithm has superior performance than the existing DKF algorithms in terms of both accuracy and convergence speed as verified by simulation results.
	It also achieves comparable performance as the centralized Kalman filter (CKF), which is usually used as a baseline to evaluate distributed estimators.
	
	Second, we rigorously prove that the STT algorithm converges at an exponential rate. It is important to establish such a theoretical convergence guarantee to obtain reliable practical applications. One key technical challenge for the convergence analysis is that the measurement matrix that corresponds to each bearing measurement is rank deficient. We fully exploit the unique features of an orthogonal projection operator (e.g., Lemma~\ref{lemma:exp_lambda_S} to Lemma~\ref{lemma:S_inv}) that is widely used in bearing-based estimation and control tasks \cite{zhao2015bearing,li2022three}, and successfully prove that the expectation of the estimation error converges exponentially.
	
	Third, we develop an autonomous vision-based cooperative aerial target pursuit system to verify the effectiveness of the STT algorithm under practical challenging conditions.
	This system consists of three quadcopter MAVs as the pursers and one quadcopter MAV as the target. All necessary functions are realized in onboard computers including visual detection, motion estimation, and formation control. The three purser MAVs can autonomously detect the target MAV in the images by a well-trained Yolo-based detector, estimate the target's position and velocity by the proposed STT algorithm, and follow the moving target to maintain a pre-specified geometric formation.
	To the best of our knowledge, this system is also the first of such systems reported in the literature up to now.

	\section{Related Work}\label{sec_related_work}
	
	Our work involves two types of problems: distributed cooperative estimation and vision-based target motion estimation. Among them, vision-based target motion estimation can be further divided into two categories: the first category does not require estimating the observer's own state, while the second category requires estimating the observer's own state.

	\subsection{Distributed cooperative estimation}

	The design of distributed estimation algorithms primarily relies on two frameworks:  distributed Kalman filtering (DKF) and distributed recursive least squares (DRLS).
	
	First, DKF is one of the most widely used frameworks. Some classic algorithms designed under this framework include CMKF \cite{olfati2009kalman}, CIKF \cite{olfati2005consensus}, and HCMCI-KF \cite{battistelli2014consensus}. They enhance estimation accuracy by adopting various communication information and cooperative strategies. In the latest HCMCI-KF algorithm \cite{battistelli2014consensus}, sensing nodes can enhance the overall estimation accuracy by exchanging their measurement information, estimated target's state, and estimated covariance matrix. However, it also causes a larger communication overhead and increased computational complexity.
	
	Second, DRLS is also a common framework for designing distributed estimation algorithms \cite{cattivelli2008diffusion,mateos2012distributed}. Compared to DKF, DRLS can be used to achieve a more customized design of the estimator by specifying the objective function. Additionally, DRLS algorithms offer higher computational efficiency and lower communication overhead \cite{mateos2012distributed,rastegarnia2019reduced}. However, the existing work has primarily focused on estimating invariant constants \cite{mateos2012distributed}, leaving limited research on estimating the state of moving targets.

	\subsection{Vision-based estimation without self-localization}

	A common approach for visual-based target motion estimation is to model the vision measurement as an angle or a bearing vector. This involves two main topics: estimation algorithms and observability.

	First, the non-linearity of bearing measurements often leads to instability because the estimated state and measurements are in different symmetric groups. Equivariant systems theory is a method that can address this kind of problem. The core of equivariant systems is that the system maintains consistency under specific group transformations \cite{ng2020equivariant,fornasier2022equivariant,teunissen2003theory}. 
    The bearing-based estimation algorithms usually fall into two categories. The first category employs the polar coordinate form, initially proposed in \cite{hoelzer1978modified}. Taking the extended Kalman filter (EKF) as an example, it typically operates in Euclidean space, but when the state and measurements involve non-Euclidean geometries like rotation, it can be extended to Lie group Kalman filter (LGKF) to handle geometric constraints in state transitions \cite{bonnabel2007left}. This approach not only addresses complex geometric relationships but also maintains system consistency under different group transformations, thereby enhancing estimation accuracy and adaptability. 
    The second category is the pseudo-linear Kalman filtering, which was first proposed in \cite{lingren1978position}. It solves the instability problem by transforming the nonlinear measurement equation into a pseudo-linear measurement equation. Although this transformation results in non-Gaussian noise, the theoretical analysis demonstrated unbiased velocity estimation, and the bias in position estimation can be eliminated by moving observations \cite{aidala1982biased}.

	Second, observability is a fundamental topic in bearing-based estimation tasks. It was proven that the observer needs to have higher-order motion than the target to ensure observability \cite{nardone1981observability, fogel1988nth, ning2024bearing}. However, achieving such higher-order maneuverability relative to the target is challenging in practice, especially when the target has an unknown maneuver motion.
	Cooperative estimation using multiple observers can significantly enhance observability \cite{wang2012novel, he2019trajectory, lin2002comparison}. Triangulation is a classical cooperative estimation method that utilizes multiple bearing measurements to estimate the target's position \cite{klippenstein2007feature}. Further integration with filtering algorithms enables estimating the target's velocity. However, this method has a drawback. That is, it requires two or more measurements at every time step; otherwise, the position cannot be triangulated. However, measurements from neighbors may not be available sometimes in practice due to, for example, unstable vision detection. 
	
	\subsection{Vision-based estimation with self-localization}
	
	The methods mentioned in the previous subsection assume that the observer can determine its own state through external means such as GPS. However, it may be necessary to rely on the observer's self-localization system to determine its own state. 

	The most common self-localization technology is simultaneous localization and mapping (SLAM), which enables the observer to estimate its position within a mapping environment by simultaneously creating a map of the environment. It is worth noting that SLAM only achieves self-localization, and after self-localization, it is still necessary to combine with other algorithms to achieve motion estimation of a target of interest. This task is typically associated with dynamic SLAM \cite{yang2019cubeslam, qiu2019tracking}, where there are moving objects in the environment. In dynamic SLAM, the algorithms for target motion estimation operate on similar principles as those introduced in the previous subsection.

	The basic principle of simultaneous localization and mapping (SLAM) involves obtaining precise nonlinear pose transformations of the camera between different time instances using environmental feature information obtained from images. Pose graph optimization (PGO) is a commonly used method in SLAM for optimizing the pose estimation of robots or drones. The core idea of PGO is to improve the accuracy of global pose estimation by optimizing a graph structure, which can be further adapted into a \emph{nonlinear least squares optimization problem} \cite{yue2023single, carlone2015initialization,deutsch2016framework,sunderhauf2012switchable}. Georgia tech smoothing and mapping (GTSAM) \cite{dellaert2012factor} is a widely used algorithm for solving PGO and is extended to solve nonlinear problems. It can process various measurements, such as position, distance, bearing, and velocity.
	
	\section{Problem Setup and Preliminaries}\label{sec_problem_setup}
	
	\subsection{Problem statement}\label{subsec_problem}
	
	Consider $n$ observers in 3D space, each capable of obtaining a noisy bearing measurement of a target through visual sensing, and sharing information via a wireless communication network.
	Let the current time step be $k$ ($k=1,2,\dots$). The network at time $k$ is described by a undirected graph $\G_k=\{\V,\mathcal{E}_k\}$, where $\V=\{1,\dots,n\}$ is the vertex set and $\mathcal{E}_k\subseteq \mathcal{V} \times \mathcal{V}$ is the edge set. An edge $(i,j)\in\mathcal{E}_k$ indicates that observer $i$ can receive information from observer $j$, implying that observer $j$ is a neighbor of observer $i$ at time $k$. The set of neighbors of $i$ is denoted as $\mathcal{N}_{i,k}=\{j \in \mathcal{V}: (i,j)\in \mathcal{E}_k\}$.
	Due to the potential unreliability of wireless communication, the graph $\G_k$ may vary over time and may not always be connected.
	
	Suppose $\mathbf{p}_k,\mathbf{v}_k\in\mathbb{R}^3$ are the true position and velocity of the target. Let $\s_{i,k}\in\mathbb{R}^3$ be the true position of observer $i$ where $i\in\V$. The unit bearing vector pointing from $\s_{i,k}$ to $\mathbf{p}_k$ is
	\begin{align}
		\g_{i,k} = \frac{\mathbf{p}_k - \s_{i,k}}{r_{i,k}}\in\mathbb{R}^3,
		\label{eq_bearingAccurate}
	\end{align}
	where $r_{i,k} = \|\mathbf{p}_k - \s_{i,k}\|_2$.
	The noisy measurement of $\g_{i,k}$ is
	\begin{align}
		\tilde{\g}_{i,k} & = \R_{i,k}\g_{i,k},\label{eq_measuredBearingVector}
	\end{align}
	where $\R_{i,k}$ is a random rotation matrix that perturbs $\g_{i,k}$.
	The noisy measurement of $\s_i$ is
	\begin{align}
		\tilde{\s}_{i,k} & =\s_{i,k} + \epsilon_{\s_{i,k}},
		\label{eq_measuredposition}
	\end{align}
	where $\epsilon_{\s_{i,k}}$ is a zero-mean Gaussian noise.

	The problem to be solved in this paper is stated as follows. Suppose the $i$th observer 1) can obtain a noisy position measurement $\tilde{\s}_{i,k}$ by, for example, GPS, 2) can obtain a noisy bearing measurement $\tilde{\g}_{i,k}$ of the target from a monocular camera, and 3) certain necessary information transmitted from its neighbors. The aim is to estimate the target's position $\mathbf{p}_k$ and velocity $\mathbf{v}_k$ accurately and promptly.
	
	\subsection{Preliminaries: State and measurement equations}
	
	To estimate the target's motion, we first present the state transition and measurement equations.
	
	The state vector of the target is defined as $\x_k=[\mathbf{p}_k^T,\mathbf{v}_k^T]^T\in\mathbb{R}^6$.
	Suppose the state is governed by a noise-driven double-integrator model:
		\begin{align}
			\x_{k+1} = \A\x_{k} +\mathbf{w}_{k},\label{eq_state_transition}
		\end{align}
	where
	\begin{align}
		\A
		&=\begin{bmatrix}
			\I_3 & \Delta t\I_3 \\
			\zeros_3& \I_3
		\end{bmatrix}\in\mathbb{R}^{6\times 6},
		\label{eq_process_matrix_A}
	\end{align}
	where $\Delta t$ is the sampling time, $\I_3\in\mathbb{R}^{3\times 3}$ is the identity matrix, and $\zeros_3\in\mathbb{R}^{3\times 3}$ is a zero matrix.
	Here, $\mathbf{w}_k\in\mathbb{R}^{6}$ is a zero-mean Gaussian noise: $\mathbf{w}_k \sim~\mathbb{N} \left(0,\mathbf{Q}\right)$. It is notable that the discrete-time model in
	\eqref{eq_state_transition} can be obtained from a continuous-time model \cite{li2003survey}. The specific expressions of $\mathbf{w}_k$ and $\mathbf{Q}$ can be found in \cite{yan2018ridi,aaslund2022covariance} and omitted here. 
	
	Second, we establish the measurement equation.
	The noisy bearing vector in \eqref{eq_measuredBearingVector} is a \emph{nonlinear equation} of the target's state. It is necessary to convert it to a \emph{pseudo-linear equation} to achieve better estimation stability \cite{lin2002comparison}.
	
	To that end, we introduce a useful orthogonal projection operator frequently used in this paper.
	For any unit vector $\g\in \mathbb{R}^3$, define
	\begin{align}
		\P_{\g} = \I_3 - \g\g^T\in\mathbb{R}^{3\times3}.
		\label{eq_P_g_equation}
	\end{align}
	The interpretation of $\P_\g$ is that it projects any vector $\z\in \mathbb{R}^3$ orthogonally onto the plane orthogonal of $\g$. It holds that $\P_\g = \P_\g^T$, $\P_\g^2 = \P_\g$, and $\text{Null}\left(\P_\g\right)=\text{span}\left(\g\right)$.
	The projection matrix $\P_\g$ plays an important role in bearing-related problems \cite{zhao2019bearing}.
	
	With this orthogonal projection matrix, we can convert the nonlinear measurement equation \eqref{eq_measuredBearingVector} to a pseudo-linear one. To that end, we have
	\begin{align}
		\tilde{\g}_{i,k}
		& =\R_{i,k}\g_{i,k}
		=\g_{i,k}+\underbrace{(\R_{i,k}\g_{i,k}-\g_{i,k})}_{\boldsymbol{\mu}_{i,k}}.\label{eq_measuredBearingVector2}
	\end{align}
	In fact, the noise $\boldsymbol{\mu}$ is jointly determined by the noisy vision measurements and the noisy attitude measurement of the observer.
	Multiplying $\P_{\tilde{\g}_{i,k}}$ on both sides of \eqref{eq_measuredBearingVector2} gives
	\begin{align*}
		\zeros=\P_{\tilde{\g}_{i,k}}(\g_{i,k}+\boldsymbol{\mu}_{i,k}),
	\end{align*}
	where the left-hand side is zero because $\P_{\tilde{\g}_{i,k}}\tilde{\g}_{i,k}=\zeros$.
	Substituting \eqref{eq_bearingAccurate} and \eqref{eq_measuredposition} into the above equation yields
	\begin{align*}
		\zeros=\P_{\tilde{\g}_{i,k}}\left(\frac{\mathbf{p}_k - (\tilde{\s}_{i,k} - \epsilon_{\s_{i,k}})} {r_{i,k}} + \boldsymbol{\mu}_{i,k} \right),
	\end{align*}
	which can be reorganized to
	\begin{align*} \P_{\tilde{\g}_{i,k}}\tilde{\s}_{i,k}=\P_{\tilde{\g}_{i,k}}\mathbf{p}_k+\P_{\tilde{\g}_{i,k}}(\epsilon_{\s_{i,k}} +r_{i,k}\boldsymbol{\mu}_{i,k}),
	\end{align*}
	which can be further expressed as
	\begin{align}        \underbrace{\P_{\tilde{\g}_{i,k}}\tilde{\s}_{i,k}}_{\z_{i,k}} = \underbrace{\begin{bmatrix}
				\P_{\tilde{\g}_{i,k}} & \zeros_3
		\end{bmatrix}}_{\H_{i,k}}\x_{k} + \underbrace{\P_{\tilde{\g}_{i,k}}(\epsilon_{\s_{i,k}} +r_{i,k}\boldsymbol{\mu}_{i,k})}_{\boldsymbol{\nu}_{i,k}}.
		\label{eq_pseudo_measurement_equation}
	\end{align}
	As can be seen in \eqref{eq_pseudo_measurement_equation}, the noise $\boldsymbol{\nu}_{i,k}$ reflects the noise $\epsilon_{\s_{i},k}$ in the observer's position and the noise $\boldsymbol{\mu}_{i,k}$. Therefore, the proposed algorithm can handle noisy position measurements and noisy attitude measurements to a certain extent.
	
	Equation~\eqref{eq_pseudo_measurement_equation} is the pseudo-linear measurement equation. It is called pseudo-linear because, although the expression is linear in the state $\x_{k}$, both the state matrix $\H_{i,k}$ and noise $\boldsymbol{\nu}_{i,k}$ are functions of the measurements. Although the noise $\boldsymbol{\nu}_{i,k}$ is not Gaussian anymore, the pseudo-linear equation can achieve better stability than the nonlinear extended Kalman filter \cite{lin2002comparison}.
	
	\subsection{Preliminaries: Some useful results}
	
	Next, we introduce some mathematical preliminaries used throughout this paper.
	
	Let $\smin(\cdot)$ and $\sigma_{\max}(\cdot)$ be the smallest and greatest singular values of a matrix, respectively. Denote $\|\cdot\|$ as the spectral norm of a matrix. For any non-singular square matrix $\A$, we have $\|\A\|=\sigma_{\max}(\A)$ and $\|\A^{-1}\|=1/\smin(\A)$.
	If a matrix is symmetric positive semi-definite, the singular values are equal to its eigenvalues.
	
	For any two symmetric positive semi-definite matrices $\A$ and $\mathbf{B}$, it holds that
	$\smin(\A + \mathbf{B}) \geq \smin(\A) + \smin(\mathbf{B})$.
	If $\A - \mathbf{B} \geq 0$, which means $\A-\mathbf{B}$ is positive semi-definite, then $\smin(\A) \geq \smin(\mathbf{B})$.
	Suppose $\A$, $\mathbf{B}$, and $\mathbf{C}$ are matrices with appropriate dimensions. Let $\otimes$ be the Kronecker product. Then,
	$\A\otimes(\mathbf{B} + \mathbf{C}) = \A\otimes \mathbf{B} + \A \otimes \mathbf{C}$.
	Suppose $\sigma_i(\A)$ and $\sigma_j(\mathbf{B})$ are the $i$th and $j$th singular values of $\A$ and $\mathbf{B}$, respectively. Then, the set of singular values of $\A \otimes \mathbf{B}$ are $\{\sigma_i(\A)\sigma_j(\mathbf{B})\}_{i,j}$. As a consequence,
	$\smin(\A \otimes \mathbf{B})= \smin(\A)\smin(\mathbf{B})$.
	Finally, if $\A$, $\mathbf{C}$, and $\A+\mathbf{C}$ are non-singular, then \cite{henderson1981deriving}
	\begin{align}\label{eq_tool_ucv}
		(\A+\mathbf{C})^{-1} = (\I - (\mathbf{C}^{-1}\A+ \I)^{-1})\A^{-1}.
	\end{align}
	
	\section{Spatial-Temporal Triangulation Algorithm}
	
	This section presents and analyzes a new cooperative bearing-only target motion estimator called \emph{spatial-temporal triangulation} (STT) based on the framework of DRLS. Here, ``spatial'' refers to the information obtained from multiple observers located at different spatial positions at the same moment. Here, ``temporal'' refers to the historical information obtained by each observer at different moments.
	
	\subsection{Objective function}
	
	First of all, we define an objective function that can 1) fully utilize all the information available to each observer, 2) balance different performance metrics such as estimation smoothness and convergence speed, and 3) ensure the resulting algorithm is convergent.
	
	First, suppose $\hat{\x}_{i,t}$ is the estimate of $\x$ by observer $i$ at time $t$. Define the \emph{measurements error} for observer $i$ as
	\begin{align*}
		&J_{\rm meas}(\hat{\x}_{i,t})\\
		& =\sum_{j\in (i\cup\N_{i,t})}
		\alpha_{ij,t}\left(\z_{j,t} - \H_{j,t}\hat{\x}_{i,t}\right)^T\R\left(\z_{j,t} - \H_{j,t}\hat{\x}_{i,t}\right)\\
		&\doteq\sum_{j\in (i\cup\N_{i,t})}\alpha_{ij,t}
		\big\|\z_{j,t} - \H_{j,t}\hat{\x}_{i,t}\big\|^2_{\R}.
	\end{align*}
	The measurement error $J_{\rm meas}(\hat{\x}_{i,t})$ represents the \emph{discrepancy between the estimates and the measurements}.
	More specifically, it involves $(\z_{i,t} - \H_{i,t}\hat{\x}_{i,t})$ and $(\z_{j,t} - \H_{j,t}\hat{\x}_{i,t})$ with $j\in\N_{i,t}$.
	Here, $(\z_{i,t} - \H_{i,t}\hat{\x}_{i,t})$ indicates the deficiency between observer $i$'s measurement and estimate.
	$(\z_{j,t} - \H_{j,t}\hat{\x}_{i,t})$ indicates the deficiency between observer $i$'s estimate and observer $j$'s measurement.
	Here, $\alpha_{ij,t}\ge0$ is a weight and satisfies $\sum_{j\in (i\cup\N_{i,t})}\alpha_{ij,t} = 1$. The weight matrix $\R$ is selected as $ \R=\I_3/\sigma_{\boldsymbol{\nu}}^2$.
	This measurement error incorporates the constraint of triangulation geometry.
	In the ideal case, this measurement error should be zero, meaning that the bearing measurements of different observers intersect at the same point in the 3D space and the estimate coincides with this point.
	
	Second, define the \emph{consensus error} as
	\begin{align*}
		J_{\text{cons}}(\hat{\x}_{i,t})
		&=\sum_{j\in (i\cup\N_{i,t})}\beta_{ij,t}\left(\hat{\x}^{-}_{j,t} - \hat{\x}_{i,t}\right)^T \left(\hat{\x}^{-}_{j,t} - \hat{\x}_{i,t}\right)\\
		&\doteq\sum_{j\in(i\cup\N_{i,t})}\beta_{ij,t}\big\|\hat{\x}^{-}_{j,t}-\hat{\x}_{i,t}\big\|^2,
	\end{align*}
	where $\hat{\x}^{-}_{j,t}=\A\hat{\x}_{j,t-1}$ is the predicted estimate of neighbor $j$.
	The consensus error $J_{\text{cons}}$ represents the \emph{deficiency between the estimates of observer $i$ and its neighbors}.
	Using $\hat{\x}^{-}_{j,t}$ instead of $\hat{\x}_{j,t}$ because the later is still unknown at time $t$.
	Here, $\beta_{ij,t}\ge0$ is a weight and satisfies $\sum_{j\in (i\cup\N_{i,t})}\beta_{ij,t} = 1$.
	In the ideal case, this consensus error should be zero, meaning that all the observers have reached a consensus on the target's state.
	
	The measurement and consensus errors are calculated for a single time step $t$. The overall objective function is the sum of the two errors over the time horizon of $t=1,\dots,k$:
	\begin{align}
		J\left(\{\hat{\x}_{i,t}\}_{t=1}^k\right) =
		& c\sum_{t=1}^{k}\lambda_t^{(k)} J_{\rm meas}(\hat{\x}_{i,t})+\sum_{t=1}^{k}\lambda_t^{(k)}J_{\text{cons}}(\hat{\x}_{i,t}),
		\label{eq_J_x_t_k}
	\end{align}
	where $c $ is a positive weight to balance $J_{\rm meas}$ and $J_{\text{cons}}$. Here, $\lambda_t^{(k)}\in(0,1)$ is a \emph{forgetting factor} that depends on both $t$ and $k$.
	The objective function in \eqref{eq_J_x_t_k} incorporates all the available information and the geometric constraint in the problem of bearing-only cooperative motion estimation.
	Ideally, this objective function should be equal to zero. However, in practice, it is usually nonzero and we need to find the optimal estimate to minimize it.
	
	To that end, we further incorporate the dynamic constraints among $\{\hat{\x}_{i,t}\}_{t=1}^k$. In particular, it follows from the state transition equation that $\hat{\x}_{i,t+1} = \A \hat{\x}_{i,t}$.
	Then, we have $\hat{\x}_{i,t}=\A^{t-k}\hat{\x}_{i,k}$, substituting which into \eqref{eq_J_x_t_k} gives
	\begin{align}
		& J\left(\hat{\x}_{i,k}\right) \nonumber\\
		& =c  \sum_{t=1}^{k}\left(\lambda_t^{(k)}\sum_{j\in (i\cup\N_{i,t})}\alpha_{ij,t}
		\big\|\z_{j,t} - \H_{j,t}\A^{t-k}\hat{\x}_{i,k}\big\|^2_{\R}\right)\nonumber\\
		& \quad+  \sum_{t=1}^{k}\left(\lambda_t^{(k)}\sum_{j\in (i\cup\N_{i,t})}\beta_{ij,t}\big\|\hat{\x}_{j,t}^{-}-\A^{t-k+1}\hat{\x}_{i,k}\big\|^2\right).
		\label{eq_J_x_k}
	\end{align}
	
	\subsection{The proposed STT algorithm}
	
	We next present the algorithm that can find the optimal solution of \eqref{eq_J_x_k}.
	First of all, we design the forgetting factor as
	\begin{align}
		\lambda_t^{(k)}  = \frac{\gamma_2^{k-t}}{\left(\|\A\| + \gamma_1\|\A\|\right)^{k-t+1}},
		\label{eq_value_lambda}
	\end{align}
	where $k$ is the current time step and $t = 1,2,\dots,k$ indicates the historical time steps. Moreover, $\gamma_1$ and $\gamma_2$ are positive constants satisfying $\gamma_1 \geq \gamma_2$.
	It can be verified that $\lambda_{t-1}^{(k)} \leq \lambda_{t}^{(k)}$ since $\|\A\|\ge1$ and $\gamma_1 \geq \gamma_2$.
	The design of $\lambda_t^{(k)}$ in \eqref{eq_value_lambda} ensures the convergence of the resulting estimation algorithm as shown in Section~\ref{sec_covergence_analysis}.
	
	With the above preparation, we are ready to give the recursive STT algorithm that can find the optimal solution of \eqref{eq_J_x_k}.
	The algorithm is described as follows, whereas the detailed derivation is postponed to the next subsection.
	In particular, consider an arbitrary initial estimate $\hat{\x}_{i,0}$ and an initial process matrix estimate $\hat{\M}_{i,0}$, which is a symmetric positive definite.
	The STT algorithm consists of three steps: \emph{prediction}, \emph{innovation}, and \emph{correction}.
	\begin{align}	
		\intertext{\textbf{Prediction:}}
		\hat{\x}^{-}_{i,k} & = \A\hat{\x}_{i,k-1},\label{eq_x_pred}\\
		\hat{\M}^{-}_{i,k} & =  \frac{1}{(1+\gamma_1)\|\A\|}\left(\A\hat{\M}_{i,k-1}\A^T\right)^{-1},\label{eq_M_pred}\\
		\intertext{\textbf{Innovation:}}
		\mathbf{e}^{\rm meas}_{i,k} & = c  \sum_{j\in(i\cup\N_{i,k})} \alpha_{ij,k} \H_{j,k}^T\R\left(\z_{j,k}  - \H_{j,k}\hat{\x}_{i,k}^{-} \right),\label{eq_error_z}\\
		\mathbf{e}^{\text{cons}}_{i,k} & = \sum_{j\in(i\cup\N_{i,k})}\beta_{ij,k}\left(\hat{\x}_{j,k}^{-} -\hat{\x}_{i,k}^{-}\right), \label{eq_error_x}\\
		\S_{i,k} & =c  \sum_{j\in(i\cup\N_{i,k})}\alpha_{ij,k} \H_{j,k}^T\R\H_{j,k}+ \I_6  ,\label{eq_covariance_S}\\
		\intertext{\textbf{Correction:}}
		\hat{\M}_{i,k} & = (\gamma_2\hat{\M}_{i,k}^{-}+ \S_{i,k})^{-1},\label{eq_M_correction}\\
		\hat{\x}_{i,k} & = \hat{\x}_{i,k}^{-}  + \hat{\M}_{i,k} ( \mathbf{e}^{\rm meas}_{i,k} + \mathbf{e}^{\text{cons}}_{i,k}).\label{eq_x_correction}
	\end{align}
	The interpretations of the algorithm are discussed as follows.
	
	\textit{Prediction:}
	While $\hat{\x}_{i,k-1}$ and $\hat{\M}_{i,k-1}$ are for time $k-1$, $\hat{\x}^{-}_{i,k}$ in \eqref{eq_x_pred} and  $\hat{\M}^{-}_{i,k}$ in \eqref{eq_M_pred} are the predicted ones at time $t$ based on the state transition model.
	
	\textit{Innovation:} $\mathbf{e}^{\rm meas}_{i,k}$ in \eqref{eq_error_z} describes the measurements error and $\mathbf{e}^{\text{cons}}_{i,k}$ in \eqref{eq_error_x} denotes the consensus error. They both represent the discrepancy and, hence, innovation information.
	
	\textit{Correction:} The matrix $\hat{\M}_{i,k}$ in \eqref{eq_M_correction} corresponds to the error covariance matrix, which is obtained from its prediction and the innovation covariance matrix $\S_{i,k}$. Then, $\hat{\x}_{i,k}$ is corrected in \eqref{eq_x_correction} based on the prediction and consensus errors.
	
	Finally, to implement the STT algorithm, each observer must receive the following information from its neighbors: $\{\hat{\x}^{-}_{j,k},\z_{j,k},\H_{j,k}\}$ where $j\in\N_{i,k}$. Here, $\{\z_{j,k},\H_{j,k}\}$ correspond to the measurements obtained by observer $j$, and $\hat{\x}^{-}_{j,k}$ is the estimate prediction of observer $j$.
	
	\subsection{Derivation of the STT algorithm}\label{derivation of STT}
	
	We next present the detailed derivation of the STT algorithm proposed in the last subsection.
	
	1) First, we derive the closed-form solution that can minimize the objective function in
	\eqref{eq_J_x_k}.
	In particular, the gradient of $J(\hat{x}_{i,k})$ is
	\begin{align*}
		\nabla_{\hat{\x}_{i,k}}J(\hat{\x}_{i,k}) = 2\left(\sum_{t =1}^k\lambda_t^{(k)}\S_{i,t}^{(k)}\right)\hat{\x}_{i,k}- 2\left(\sum_{t =1}^k\lambda_t^{(k)}\y_{i,t}^{(k)}\right),
	\end{align*}
	where
	\begin{align}
		\y_{i,t}^{(k)}
		&=(\A^{t-k})^{T}\left[ \sum_{j\in (i\cup\N_{i,t})}(c \alpha_{ij,t}\H_{j,t}^T\R \z_{j,t}+ \beta_{ij,t}\hat{\x}_{j,t}^{-})\right],\label{eq_value_y_t}\\
		\S_{i,t}^{(k)}
		& = (\A^{t-k})^{T}\left[\sum_{j\in (i,\N_{i,t})}( c  \alpha_{ij,t}\H_{j,t}^T\R \H_{j,t})+ \I_6 \right]\A^{t-k}.\label{eq_value_S_t}
	\end{align}
	Note that $\y_{i,t}^{(k)}$ and $\S_{i,t}^{(k)}$ depend on both $t$ and $k$. When $t=k$, we have $\S_{i,k}^{(k)}\doteq \S_{i,k}$.
	By solving $\nabla_{\hat{\x}_{i,k}}J(\hat{\x}_{i,k})=0$, we can obtain the closed-form solution as
	\begin{align}
		\hat{\x}_{i,k} = \left(\sum_{t =1}^k\lambda_t^{(k)}\S_{i,t}^{(k)}\right)^{-1}\left(\sum_{t =1}^k\lambda_t^{(k)}\y_{i,t}^{(k)}\right).\label{eq_opt_x}
	\end{align}
	It should be noted that \eqref{eq_opt_x} cannot be implemented in practice because it involves all the historical information before time step $k$. We need to derive a recursive expression.
	
	2) Second, we derive the recursive expression of the optimal solution in \eqref{eq_opt_x}.
	To do that, define
	\begin{align}
		\hat{\M}_{i,k}
		&\doteq \lambda_{k}^{(k)}\left(\sum_{t =1}^k\lambda_t^{(k)}\S_{i,t}^{(k)}\right)^{-1},\label{eq_value_hat_M_i_k}\\
		\bar{\y}_{i,k}
		&\doteq \frac{1}{\lambda_{k}^{(k)}}\sum_{t =1}^k\lambda_t^{(k)}\y_{i,t}^{(k)}.\nonumber
	\end{align}
	Then, $\hat{\x}_{i,k}$ in \eqref{eq_opt_x} can be rewritten as $\hat{\x}_{i,k}  =   \hat{\M}_{i,k}\bar{\y}_{i,k}$.
	We next derive the recursive forms of $\hat{\M}_{i,k}$ and $\bar{\y}_{i,k}$, respectively, to obtain the recursive form of $\hat{\x}_{i,k}$.
	First of all, the relationship between $\S_{i,t}^{(k-1)}$ and $\S_{i,t}^{(k)}$, $\lambda_{t}^{(k-1)}$ and $\lambda_{t}^{(k)}$, and $\y_{i,t}^{(k-1)}$ and $\y_{i,t}^{(k)}$ can be obtained from \eqref{eq_value_S_t}, \eqref{eq_value_lambda}, and \eqref{eq_value_y_t} as
	\begin{align*}
		\S_{i,t}^{(k-1)} & = \A^T\S_{i,t}^{(k)}\A,\\
		\lambda_{t}^{(k-1)} & =  \frac{\|\A\|(1+\gamma_1)}{\gamma_2}\lambda_{t}^{(k)} = \frac{1}{\lambda_{k}^{(k)}\gamma_2}\lambda_{t}^{(k)},\\
		\y_{i,t}^{(k-1)} & = \A^T\y_{i,t}^{(k)}.
	\end{align*}
	Then, $\hat{\M}_{i,k-1}$ can be rewritten as
	\begin{align*}
		\hat{\M}_{i,k-1} &= \lambda_{k-1}^{(k-1)}\left(\sum_{t =1}^{k-1}\lambda_t^{(k-1)}\S_{i,t}^{(k-1)}\right)^{-1}\\
		&= \gamma_2\left(\lambda_{k-1}^{(k-1)}\right)^2\left(\sum_{t =1}^{k-1}\lambda_t^{(k)}\A^T\S_{i,t}^{(k)}\A\right)^{-1}\\
		&=\gamma_2\left(\lambda_{k-1}^{(k-1)}\right)^2\A^{-1}\left(\sum_{t =1}^{k-1}\lambda_t^{(k)}\S_{i,t}^{(k)}\right)^{-1}(\A^T)^{-1}.
	\end{align*}
	Since $\lambda_{k-1}^{(k-1)} = 1/(\|\A\|(1+\gamma_1)) = \lambda_{k}^{(k)}$ by definition, taking the matrix inverse on both sides of the above equation gives
	\begin{align*}
		\sum_{t =1}^{k-1}\lambda_t^{(k)}\S_{i,t}^{(k)}
		&= \gamma_2\left(\lambda_{k-1}^{(k-1)}\right)^2(\A\hat{\M}_{i,k-1}\A^T)^{-1}\\
		&= \gamma_2\left(\lambda_{k}^{(k)}\right)^2(\A\hat{\M}_{i,k-1}\A^T)^{-1}.
	\end{align*}
	Substituting the above equation into \eqref{eq_value_hat_M_i_k} yields
	\begin{align}
		\hat{\M}_{i,k}
		&  =\lambda_{k}^{(k)}\left(\sum_{t =1}^{k}\lambda_t^{(k)} \S_{i,t}^{(k)}\right)^{-1}\nonumber\\
		& = \left[\frac{1}{\lambda_{k}^{(k)}}\left(\sum_{t =1}^{k-1}\lambda_t^{(k)}\S_{i,t}^{(k)}\right) + \S_{i,k}^{(k)}\right]^{-1}\nonumber\\
		& =\left[\gamma_2\lambda_{k}^{(k)}(\A\hat{\M}_{i,k-1}\A^T)^{-1} + \S_{i,k}^{(k)}\right]^{-1}\nonumber\\
		& =  \left(\gamma_2\hat{\M}_{i,k}^{-} + \S_{i,k}^{(k)} \right)^{-1},\label{eq_rel_S_M}
	\end{align}
	where $\hat{\M}_{i,k}^{-}$ is given in \eqref{eq_M_pred}.
	Similarly, the relationship between $\y_{i,t}^{(k-1)}$ and $\y_{i,t}^{(k)}$ can be obtained from \eqref{eq_value_y_t}:
	The recursive form of $\bar{\y}_{i,k}$ can be derived as
	\begin{align}
		\bar{\y}_{i,k}
		& = \frac{1}{\lambda_{k}^{(k)}} \sum_{t =1}^{k}\lambda_t^{(k)} \y_{i,t}^{(k)}\nonumber\\
		& =  \frac{1}{\lambda_{k}^{(k)}}\sum_{t =1}^{k-1}\lambda_t^{(k)}\y_{i,t}^{(k)}+ \y_{i,k}^{(k)}\nonumber\\
		& =    \gamma_2(\A^T)^{-1}\sum_{t =1}^{k-1}\lambda_t^{(k-1)}\y_{i,t}^{(k-1)}+ \y_{i,k}^{(k)}\nonumber\\
		&= \gamma_2(\A^T)^{-1}\bar{\y}_{i,k-1}  +  \y_{i,k}^{(k)}.\label{eq_bar_y}
	\end{align}
    
	3) Third, since $\S_{i,k}^{(k)}$ and $\hat{\M}_{i,k}^{-} $ are symmetric positive definition matrices, \eqref{eq_rel_S_M} can be rewritten as
	\begin{align}
		\hat{\M}_{i,k} =  \left[\I_6  - \left(\gamma_2 (\S_{i,k}^{(k)})^{-1} \hat{\M}_{i,k}^{-}+ \I_6  \right)^{-1} \right]\frac{1}{\gamma_2}\left(\hat{\M}_{i,k}^{-}\right)^{-1},\label{eq_UCV_M}
	\end{align}
	based on the preliminary fact in \eqref{eq_tool_ucv}.
	Substituting \eqref{eq_bar_y} and \eqref{eq_UCV_M} into $\hat{\x}_{i,k}=\hat{\M}_{i,k}\bar{\y}_{i,k}$ yields the recursive form of $\hat{\x}_{i,k}$:
	\begin{align*}
		\hat{\x}_{i,k}
		& = \A \hat{\x}_{i,k-1} -   \hat{\M}_{i,k} \S_{i,k}^{(k)} \A\hat{\x}_{i,k-1} + \hat{\M}_{i,k}\y_{i,k}^{(k)}\\
		&= \hat{\x}_{i,k}^{-} + \hat{\M}_{i,k}\left(  \y_{i,k}^{(k)}- \S_{i,k}^{(k)} \hat{\x}_{i,k}^{-}\right),
	\end{align*}
	where $\hat{\x}_{i,k}^{-} $ is given in \eqref{eq_x_pred} and $ \y_{i,k}^{(k)} - \S_{i,k}^{(k)} \hat{\x}_{i,k}^{-}$ can be split into
	\begin{align*}
		& \y_{i,k}^{(k)}- \S_{i,k}^{(k)} \hat{\x}_{i,k}^{-} \\
		& =  \sum_{j\in (i\cup\N_{i,k})}(c \alpha_{ij,t}\H_{j,k}^T\R \z_{j,k}+ \beta_{ij,k}\hat{\x}_{j,k}^{-}) \\
		& \qquad - \sum_{j\in (i\cup\N_{i,k})}( c  \alpha_{ij,k}\H_{j,k}^T\R \H_{j,k})\hat{\x}_{i,k}^{-} -  \hat{\x}_{i,k}^{-}\\
		& = \mathbf{e}^{\rm meas}_{i,k} + \mathbf{e}^{\text{cons}}_{i,k},
	\end{align*}
	where $\mathbf{e}^{\rm meas}_{i,k}$ and $\mathbf{e}^{\text{cons}}_{i,k} $ are given in \eqref{eq_error_z} and \eqref{eq_error_x}, respectively.
	
	\section{Convergence Analysis}\label{sec_covergence_analysis}
	
	The STT algorithm in \eqref{eq_x_pred}-\eqref{eq_x_correction} is executed by a single observer. Since each observer's estimation relies on other observers' estimation, we need to prove that the overall interconnected system is convergent.
	The aim of the convergence analysis is to show that the expectation of the error converges to zero.
	
	Suppose the current time step is $k$. The estimation error of observer $i$ is denoted as $\Eta_{i,k}\doteq\hat{\x}_{i,k} - \x_k $. It follows from \eqref{eq_opt_x} that
	\begin{align*}
		\Eta_{i,k}
		& =  \hat{\x}_{i,k} - \x_k\\
		& =\left(\sum_{t =1}^k\lambda_t^{(k)}\S_{i,t}^{(k)}\right)^{-1}\sum_{t =1}^k\lambda_t^{(k)}\y_{i,t}^{(k)} - \x_k\\
		& =  \left(\sum_{t =1}^k\lambda_t^{(k)}\S_{i,t}^{(k)}\right)^{-1}\left[\sum_{t =1}^k\lambda_t^{(k)}\left(\y_{i,t}^{(k)} - \S_{i,t}^{(k)} \x_k\right)\right],
	\end{align*}
	where $\y_{i,t}^{(k)}$ and $\S_{i,t}^{(k)}$ are given in \eqref{eq_value_y_t} and \eqref{eq_value_S_t}, respectively.
	
	We next derive the expression of the expectation of $\Eta_{i,k}$.
	
	\begin{lemma}[{Expression of $\E[\Eta_{i,k}]$}]\label{lemma:exp_eta}
		The expectation of $\Eta_{i,k}$ is expressed as
		\begin{align}
			\E[\Eta_{i,k}]= \left(\sum_{t =1}^k\lambda_t^{(k)}\S_{i,t}^{(k)}\right)^{-1}\E[\bar{\Eta}_{i,k}],\label{eq_exp_eta}
		\end{align}
		where
		\begin{align}
			\bar{\Eta}_{i,k} & \doteq \sum_{t =1}^k\left[\lambda_t^{(k)}(\A^{t-k})^{T}\sum_{j\in (i\cup\N_{i,t})}(\beta_{ij,t}\A\Eta_{j,t-1})\right].\label{eq_bar_eta}
		\end{align}
	\end{lemma}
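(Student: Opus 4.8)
The plan is to start from the closed-form error expression derived immediately before the lemma,
\begin{align*}
\Eta_{i,k} = \left(\sum_{t=1}^k\lambda_t^{(k)}\S_{i,t}^{(k)}\right)^{-1}\sum_{t=1}^k\lambda_t^{(k)}\left(\y_{i,t}^{(k)}-\S_{i,t}^{(k)}\x_k\right),
\end{align*}
and to show that, after taking expectation, each summand $\y_{i,t}^{(k)}-\S_{i,t}^{(k)}\x_k$ reduces to the consensus-driven term $(\A^{t-k})^{T}\sum_{j\in(i\cup\N_{i,t})}\beta_{ij,t}\A\Eta_{j,t-1}$ appearing in $\bar{\Eta}_{i,k}$, while every remaining contribution is zero-mean. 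Since $\lambda_t^{(k)}$ and the Gram-type matrices $\S_{i,t}^{(k)}$ enter only through the (fixed) measurement geometry, the leading inverse factor is carried outside the expectation, which yields exactly \eqref{eq_exp_eta}.

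The core is a bookkeeping computation on $\y_{i,t}^{(k)}-\S_{i,t}^{(k)}\x_k$ using the definitions \eqref{eq_value_y_t} and \eqref{eq_value_S_t}. First I would substitute the pseudo-linear model $\z_{j,t}=\H_{j,t}\x_t+\boldsymbol{\nu}_{j,t}$ and the prediction $\hat{\x}_{j,t}^{-}=\A\hat{\x}_{j,t-1}$, and then replace $\A^{t-k}\x_k$ by $\x_t+\mathbf{d}_{t,k}$, where $\mathbf{d}_{t,k}\doteq\A^{t-k}\x_k-\x_t=\sum_{\ell=t}^{k-1}\A^{t-1-\ell}\mathbf{w}_\ell$ is the accumulated process noise obtained by unrolling \eqref{eq_state_transition}. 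Using the normalization $\sum_{j\in(i\cup\N_{i,t})}\beta_{ij,t}=1$ together with $\x_t=\A\x_{t-1}+\mathbf{w}_{t-1}$, the consensus block collapses to $\sum_{j}\beta_{ij,t}(\A\hat{\x}_{j,t-1}-\x_t)=\sum_{j}\beta_{ij,t}\A\Eta_{j,t-1}-\mathbf{w}_{t-1}$, while the measurement block collapses to $c\sum_{j}\alpha_{ij,t}\H_{j,t}^T\R\boldsymbol{\nu}_{j,t}-c\sum_{j}\alpha_{ij,t}\H_{j,t}^T\R\H_{j,t}\mathbf{d}_{t,k}-\mathbf{d}_{t,k}$. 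Hence $\y_{i,t}^{(k)}-\S_{i,t}^{(k)}\x_k$ equals $(\A^{t-k})^{T}$ times the sum of the wanted consensus term and a residual built from $\boldsymbol{\nu}_{j,t}$, $\mathbf{w}_{t-1}$, and $\mathbf{d}_{t,k}$.

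The final step is to take expectation and discard the residual. The purely additive pieces $\mathbf{w}_{t-1}$ and $\mathbf{d}_{t,k}$ vanish because $\mathbf{w}_\ell$ is zero-mean; the coupled term $\H_{j,t}^T\R\H_{j,t}\mathbf{d}_{t,k}$ vanishes because the bearing noise defining $\H_{j,t}$ at time $t$ is independent of the process noise $\{\mathbf{w}_\ell\}_{\ell\ge t}$ entering $\mathbf{d}_{t,k}$, so the expectation factorizes and $\E[\mathbf{d}_{t,k}]=0$. What remains is the bearing measurement term $\E[\H_{j,t}^T\R\boldsymbol{\nu}_{j,t}]$; recalling that $\H_{j,t}$ acts on the position block through $\P_{\tilde{\g}_{j,t}}$, that $\R=\I_3/\sigma_{\boldsymbol{\nu}}^2$, and that $\boldsymbol{\nu}_{j,t}=\P_{\tilde{\g}_{j,t}}(\epsilon_{\s_{j,t}}+r_{j,t}\boldsymbol{\mu}_{j,t})$, this term splits into a position-noise part $\E[\P_{\tilde{\g}_{j,t}}\epsilon_{\s_{j,t}}]$, which is zero by the independence and zero mean of $\epsilon_{\s_{j,t}}$, and a bearing-noise part $\E[\P_{\tilde{\g}_{j,t}}\boldsymbol{\mu}_{j,t}]=-\E[\P_{\tilde{\g}_{j,t}}]\g_{j,t}$, which must vanish under the unbiasedness of the bearing perturbation (see the obstacle below). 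With all residual terms eliminated, linearity of expectation reassembles the surviving consensus terms into $\E[\bar{\Eta}_{i,k}]$ in \eqref{eq_bar_eta}, and factoring out the deterministic normalizer completes the proof.

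I anticipate the main obstacle to be exactly this last residual: establishing $\E[\P_{\tilde{\g}_{j,t}}\boldsymbol{\mu}_{j,t}]=0$, i.e.\ that the bearing measurement is unbiased in the sense $\E[\P_{\tilde{\g}_{j,t}}]\g_{j,t}=0$, since $\H_{j,t}$ and $\boldsymbol{\nu}_{j,t}$ are driven by the same random rotation $\R_{j,t}$ and cannot be decorrelated by naively conditioning on the geometry. I expect this to require the standing zero-mean/unbiasedness assumption on the perturbation $\R_{j,t}$ (and the independence of $\epsilon_{\s_{j,t}}$ and $\mathbf{w}$ from the bearing noise), which is precisely where the special structure of the projection operator $\P_{\tilde{\g}}$ must be invoked; the accompanying care needed to pull the random normalizing inverse $\left(\sum_{t=1}^k\lambda_t^{(k)}\S_{i,t}^{(k)}\right)^{-1}$ outside the expectation under this conditioning is the secondary technical point.
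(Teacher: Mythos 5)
Your proposal follows essentially the same route as the paper's proof: start from the closed-form error expression, unroll the state transition so that $\A^{t-k}\x_k$ equals $\x_t$ plus accumulated process noise, substitute the pseudo-linear measurement equation and the prediction $\hat{\x}_{j,t}^{-}=\A\hat{\x}_{j,t-1}$, collapse the consensus block to $\sum_{j}\beta_{ij,t}\A\Eta_{j,t-1}$ plus noise terms, and let expectation annihilate the zero-mean residuals. The one obstacle you flag—rigorously establishing $\E[\P_{\tilde{\g}_{j,t}}\boldsymbol{\mu}_{j,t}]=0$ despite $\H_{j,t}$ and $\boldsymbol{\nu}_{j,t}$ being driven by the same random rotation—is not actually resolved in the paper either: its proof simply declares $\E[\boldsymbol{\nu}]=0$ ``by assumption'' (and likewise treats $\bigl(\sum_{t=1}^k\lambda_t^{(k)}\S_{i,t}^{(k)}\bigr)^{-1}$ as containing no random variables), so your more cautious accounting of these hypotheses is, if anything, more explicit than the published argument.
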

	\begin{proof}
		See Appendix~\ref{proof:exp_eta}.
	\end{proof}
	
	To prove that $\E[\Eta_{i,k}]$ converges to zero, we analyze its upper bound.
	It follows from \eqref{eq_exp_eta} that
	\begin{align}
		\|\E[\Eta_{i,k}] \|
		& \leq \begin{Vmatrix}
			\left(\sum_{t =1}^k\lambda_t^{(k)} \S_{i,t}^{(k)}\right)^{-1}
		\end{Vmatrix}\|\E[\bar{\Eta}_{i,k}]\|\nonumber\\
		& = \frac{1}{\smin\left(\sum_{t =1}^k\lambda_t^{(k)}\S_{i,t}^{(k)}\right)}\|\E[\bar{\Eta}_{i,k}]\|.\label{eq_E_eta_leq_E_bar_eta}
	\end{align}
	
	In the following, we establish Lemma~\ref{lemma:exp_lambda_S} to Lemma~\ref{lemma:S_inv} to prove that
	\begin{align}\label{eq_keyInequalityForConvergence}
		\smin\left(\sum_{t =1}^k\lambda_t^{(k)}\S_{i,t}^{(k)}\right)\ge1.
	\end{align}
	This fact is nontrivial to prove, and it is a unique challenge to analyze bearing-only cooperative estimation.
	Although $\S_{i,t}^{(k)}$ is positive definite as shown in \eqref{eq_value_S_t}, given the expression of $\lambda_t^{(k)}$ in \eqref{eq_value_lambda}, we are not able to trivially claim that \eqref{eq_keyInequalityForConvergence} is valid because $\H_{j,t}^T\R \H_{j,t}$ in \eqref{eq_value_S_t} is rank deficient. We must carefully explore the intrinsic structure of $\H_{j,t}$ and analyze the properties of orthogonal projection matrices embedded in there.
	
	\begin{lemma}[Expression of $\sum_{t =1}^k\lambda_{t}^{(k)}\S_{i,t}^{(k)}$]\label{lemma:exp_lambda_S}
		The expression of $\sum_{t =1}^k\lambda_{t}^{(k)}\S_{i,t}^{(k)}$ is given by
		\begin{align}
			\sum_{t =1}^k\lambda_{t}^{(k)}\S_{i,t}^{(k)}& =\frac{c }{\sigma_{\boldsymbol{\nu}}^2}\sum_{t =1}^k\lambda_{t}^{(k)}\mathbf{F}_t^{(k)}\otimes\bar{\P}_{i,t} \nonumber\\
			&\qquad + \sum_{t =1}^k\lambda_{t}^{(k)}(\A^{t-k})^{T}\A^{t-k},\label{eq_S_FP}
		\end{align}
		where
		\begin{align}
			\mathbf{F}_t^{(k)} \doteq \begin{bmatrix}
				1 & (t-k)\Delta t \\ (t-k)\Delta t& (t-k)^2\Delta t^2
			\end{bmatrix},\label{eq_F}
		\end{align}
		and
		\begin{align}
			\bar{\P}_{i,t} & \doteq \sum_{j\in (i\cup\N_{i,t})}\alpha_{ij,t}\P_{j,t}.\label{eq_bar_P}
		\end{align}
		Here, $\P_{i,t}$ is a concise notation of $\P_{\tilde{\g}_{i,t}}$, where $\tilde{\g}_{i,t}$ is the bearing vector measured by observer $i$ at time step $t$. The matrix operator $\P$ is defined in \eqref{eq_P_g_equation}.
	\end{lemma}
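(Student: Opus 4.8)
The plan is to reduce the definition of $\S_{i,t}^{(k)}$ in \eqref{eq_value_S_t} to the claimed form in two stages: first simplify the bracketed matrix, then propagate the conjugation by $\A^{t-k}$ through a Kronecker-product identity. I would begin by evaluating the single-observer term $\H_{j,t}^T\R\H_{j,t}$. From \eqref{eq_pseudo_measurement_equation} we have $\H_{j,t}=[\,\P_{j,t}\ \ \zeros_3\,]$; since $\R=\I_3/\sigma_{\boldsymbol{\nu}}^2$ and $\P_{j,t}$ is symmetric and idempotent, so that $\P_{j,t}^T\P_{j,t}=\P_{j,t}$ by \eqref{eq_P_g_equation}, a block multiplication yields
\begin{align*}
\H_{j,t}^T\R\H_{j,t}=\frac{1}{\sigma_{\boldsymbol{\nu}}^2}\begin{bmatrix}\P_{j,t}&\zeros_3\\\zeros_3&\zeros_3\end{bmatrix}.
\end{align*}
Multiplying by $c\alpha_{ij,t}$, summing over $j\in(i\cup\N_{i,t})$, and invoking the definition \eqref{eq_bar_P} of $\bar{\P}_{i,t}$, the matrix inside the bracket of \eqref{eq_value_S_t} becomes $\tfrac{c}{\sigma_{\boldsymbol{\nu}}^2}\begin{bmatrix}\bar{\P}_{i,t}&\zeros_3\\\zeros_3&\zeros_3\end{bmatrix}+\I_6$.

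The main step is to conjugate this bracket by $\A^{t-k}$. Here I would exploit the observation that the double-integrator matrix \eqref{eq_process_matrix_A} factors as $\A=\mathbf{G}\otimes\I_3$ with $\mathbf{G}=\begin{bmatrix}1&\Delta t\\0&1\end{bmatrix}$, so that a one-line induction gives $\A^{m}=\mathbf{G}^{m}\otimes\I_3$ with $\mathbf{G}^{m}=\begin{bmatrix}1&m\Delta t\\0&1\end{bmatrix}$ for every integer $m$; crucially this power formula remains valid for the non-positive exponents $m=t-k$ occurring here. Writing the leading block as $\begin{bmatrix}\bar{\P}_{i,t}&\zeros_3\\\zeros_3&\zeros_3\end{bmatrix}=\begin{bmatrix}1&0\\0&0\end{bmatrix}\otimes\bar{\P}_{i,t}$ and applying the mixed-product rule $(\mathbf{X}\otimes\mathbf{Y})(\mathbf{U}\otimes\mathbf{W})=(\mathbf{XU})\otimes(\mathbf{YW})$, the conjugation collapses to a purely $2\times2$ computation in the left factor,
\begin{align*}
\begin{bmatrix}1&0\\m\Delta t&1\end{bmatrix}\begin{bmatrix}1&0\\0&0\end{bmatrix}\begin{bmatrix}1&m\Delta t\\0&1\end{bmatrix}=\begin{bmatrix}1&m\Delta t\\m\Delta t&m^2\Delta t^2\end{bmatrix},
\end{align*}
which with $m=t-k$ is exactly $\mathbf{F}_t^{(k)}$ in \eqref{eq_F}, while the projection factor $\bar{\P}_{i,t}$ is carried through as the right Kronecker factor. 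The identity term conjugates simply to $(\A^{t-k})^T\A^{t-k}$, so that $\S_{i,t}^{(k)}=\tfrac{c}{\sigma_{\boldsymbol{\nu}}^2}\mathbf{F}_t^{(k)}\otimes\bar{\P}_{i,t}+(\A^{t-k})^T\A^{t-k}$.

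Finally I would multiply by $\lambda_t^{(k)}$ and sum over $t=1,\dots,k$, which assembles the two displayed terms of \eqref{eq_S_FP} and completes the argument. I expect the only delicate point to be the conjugation bookkeeping in the second step: one must recognize the Kronecker structure hidden in the block matrices and confirm that the power formula for $\A^{m}$, and hence the collapse to $\mathbf{F}_t^{(k)}$, is valid for the backward exponents $t-k\le0$ rather than only for forward powers. Everything else reduces to routine block-matrix algebra.
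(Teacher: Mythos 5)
Your proposal is correct and takes essentially the same route as the paper's proof: evaluate $\H_{j,t}^T\R\H_{j,t}=\frac{1}{\sigma_{\boldsymbol{\nu}}^2}\begin{bmatrix}\P_{j,t}&\zeros_3\\\zeros_3&\zeros_3\end{bmatrix}$ using the symmetry and idempotency of the projector, sum with the weights $c\alpha_{ij,t}$ to form $\bar{\P}_{i,t}$, then conjugate by $\A^{t-k}$ to expose the Kronecker structure $\mathbf{F}_t^{(k)}\otimes\bar{\P}_{i,t}$, and finally sum over $t$. The only difference is mechanical: the paper performs the conjugation by writing $\A^{t-k}$ and $(\A^{t-k})^T$ as explicit block matrices and multiplying them out, whereas you factor $\A=\mathbf{G}\otimes\I_3$ and invoke the mixed-product rule to collapse everything to a $2\times 2$ computation --- the same calculation with slightly tidier bookkeeping, and your explicit check that $\mathbf{G}^{m}$ retains its form for the backward exponents $m=t-k\le 0$ is a point the paper leaves implicit.
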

	\begin{proof}
		See Appendix~\ref{proof_exp_lambda_S}.
	\end{proof}
	
	Lemma~\ref{lemma:exp_lambda_S} indicates that, to analyze $\smin(\sum_{t =1}^k\lambda_t^{(k)}\S_{i,t}^{(k)})$, we need first analyze $\smin(\bar{\P}_{i,t})$ and $\smin(\sum_{t =1}^k\lambda_t^{(k)}\mathbf{F}_t^{(k)})$.
	On the one hand, we analyze the lower bound of $\smin(\bar{\P}_{i,t})$. The next is a useful result for the orthogonal projection matrices defined in \eqref{eq_bar_P}.
	
	\begin{lemma}[Value of $\smin(\P_{i,t} +\P_{j,t}) $]\label{lemma:sigma_min_P_12}
		Suppose that $\g_{i,t}$ and $\g_{j,t}$ are two bearing vectors and $\theta_{ij,t}\in[0,\pi)$ is the angle between the two vectors. If $\P_{i,t}=\I_3-\g_{i,t}\g_{i,t}^T$ and $\P_{j,t}=\I_3-\g_{j,t}\g_{j,t}^T$, then
		\begin{align*}
			\smin(\P_{i,t} +\P_{j,t}) & = 1 - |\cos\theta_{ij,t}|.
		\end{align*}
	\end{lemma}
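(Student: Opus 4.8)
The plan is to diagonalize the sum of projections by exploiting that it differs from $2\I_3$ by a rank-two matrix. Suppressing the time index, write $\P_i,\P_j,\g_i,\g_j,\theta$ for $\P_{i,t},\P_{j,t},\g_{i,t},\g_{j,t},\theta_{ij,t}$. First I would rewrite
\begin{align*}
	\P_i + \P_j = 2\I_3 - \M, \qquad \M \doteq \g_i\g_i^T + \g_j\g_j^T .
\end{align*}
Since $\M$ is symmetric positive semi-definite, so is $\P_i+\P_j$, and therefore (by the preliminary fact that the singular values of a symmetric positive semi-definite matrix equal its eigenvalues) its smallest singular value equals its smallest eigenvalue. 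Because the eigenvalues of $2\I_3-\M$ are $2-\mu$ as $\mu$ ranges over the eigenvalues of $\M$, we have $\smin(\P_i+\P_j)=2-\mu_{\max}(\M)$, where $\mu_{\max}(\M)$ is the largest eigenvalue of $\M$. The whole problem thus reduces to computing $\mu_{\max}(\M)$.

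Next I would determine the spectrum of $\M$ from its invariants. The range of $\M$ lies in $\mathrm{span}(\g_i,\g_j)$, which is at most two-dimensional; hence any unit vector orthogonal to both $\g_i$ and $\g_j$ is an eigenvector with eigenvalue $0$, and the remaining two eigenvalues $\mu_1,\mu_2$ are pinned down by $\mu_1+\mu_2=\mathrm{tr}(\M)$ together with $\mu_1\mu_2=\tfrac12\big[(\mathrm{tr}\,\M)^2-\mathrm{tr}(\M^2)\big]$. Using $\|\g_i\|=\|\g_j\|=1$ and $\g_i^T\g_j=\cos\theta$, a direct expansion gives $\M^2=\g_i\g_i^T+\g_j\g_j^T+\cos\theta\,(\g_i\g_j^T+\g_j\g_i^T)$, and hence $\mathrm{tr}(\M)=2$ and $\mathrm{tr}(\M^2)=2+2\cos^2\theta$. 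Consequently $\mu_1+\mu_2=2$ and $\mu_1\mu_2=1-\cos^2\theta=\sin^2\theta$, and solving the resulting quadratic yields $\mu_{1,2}=1\pm\sqrt{1-\sin^2\theta}=1\pm|\cos\theta|$.

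Finally, collecting the three eigenvalues of $\M$ as $\{0,\,1-|\cos\theta|,\,1+|\cos\theta|\}$ and noting that $\theta\in[0,\pi)$ forces $|\cos\theta|\in[0,1]$, the largest is $\mu_{\max}(\M)=1+|\cos\theta|$, whence
\begin{align*}
	\smin(\P_i+\P_j)=2-\big(1+|\cos\theta|\big)=1-|\cos\theta|,
\end{align*}
as claimed. The computation is elementary, so the only real subtleties are bookkeeping ones: one must be careful to take the \emph{largest} eigenvalue of $\M$ (equivalently the smallest of $\P_i+\P_j$), which is exactly where the absolute value $|\cos\theta|=\sqrt{\cos^2\theta}$ enters, and one must confirm the degenerate case $\theta=0$ (i.e.\ $\g_i=\g_j$), in which $\M$ collapses to rank one; there the formula still holds since $\mu_1\mu_2=\sin^2 0=0$ correctly returns the spectrum $\{0,0,2\}$ and $1-|\cos 0|=0$. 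As a cross-check I would mention the equivalent route of restricting $\P_i+\P_j$ to the invariant plane $\mathrm{span}(\g_i,\g_j)$ and diagonalizing the resulting $2\times2$ block directly, which avoids the trace identities at the cost of choosing coordinates.
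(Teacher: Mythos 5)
Your proof is correct, but it takes a genuinely different route from the paper's. The paper chooses coordinates: it applies a rotation $\R$ that sends $\g_{i,t}$ to $[1,0,0]^T$ and $\g_{j,t}$ to $[\cos\theta_{ij,t},\sin\theta_{ij,t},0]^T$, invokes invariance of singular values under orthogonal conjugation, and then solves the characteristic polynomial of the resulting (effectively $2\times 2$) block to get the roots $\sigma = 1\pm\cos\theta_{ij,t}$, finally taking the minimum of $1+\cos\theta_{ij,t}$ and $1-\cos\theta_{ij,t}$ to produce the absolute value. You instead work coordinate-free: writing $\P_{i,t}+\P_{j,t}=2\I_3-\M$ with $\M=\g_{i,t}\g_{i,t}^T+\g_{j,t}\g_{j,t}^T$, and pinning down the spectrum of $\M$ from the rank observation plus the invariants $\mathrm{tr}(\M)=2$ and $\mathrm{tr}(\M^2)=2+2\cos^2\theta_{ij,t}$. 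Both arguments bottom out in the same quadratic, so neither is deeper than the other; what yours buys is that no rotation (and no appeal to orthogonal invariance of $\smin$) is needed, the absolute value emerges naturally as $\sqrt{\cos^2\theta_{ij,t}}$ rather than from a case split on $\theta_{ij,t}\lessgtr\pi/2$, and the degenerate parallel case $\theta_{ij,t}=0$ is checked explicitly, which the paper leaves implicit. One quibble: your justification ``since $\M$ is symmetric positive semi-definite, so is $\P_{i,t}+\P_{j,t}$'' is a non sequitur as literally written, since positive semi-definiteness of $\M$ does not transfer to $2\I_3-\M$. The conclusion itself is immediate, though -- each $\P_{\g}$ is an orthogonal projection and hence positive semi-definite, so their sum is as well (alternatively, $\|\M\|\le\|\g_{i,t}\g_{i,t}^T\|+\|\g_{j,t}\g_{j,t}^T\|=2$) -- so this is a wording slip, not a gap.
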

	\begin{proof}
		See Appendix~\ref{proof:sigma_min_P_12}.
	\end{proof}
	
	Lemma~\ref{lemma:sigma_min_P_12} reveals the unique property of the orthogonal projection matrices. It indicates that, when the two bearings are not parallel, the matrix $(\P_{i,t} +\P_{j,t})$ is non-singular. Intuitively, this property corresponds to the triangulation geometry that observers must observe from different directions. It is essential to prove the convergence of the algorithm.
	
	With Lemma~\ref{lemma:sigma_min_P_12}, we can analyze the lower bound of $\smin(\bar{ \P}_{i,t})$.
	
	\begin{lemma}[Lower bound of $\smin(\bar{ \P}_{i,t})$]\label{lemma:sigma_min_bar_P}
		For observer $i$, suppose there exists $\upalpha_{0}>0$ such that
		\begin{align}
			\alpha_{ij,t}\geq\upalpha_{0} , \quad\text{for all $j\in(i \cup \N_{i,t})$.}\label{eq_value_alpha}
		\end{align}
		If there exists $j\in (i\cup\N_{i,t})$ such that the angle $\theta_{ij,t}$ between $\tilde{ \g}_{i,t}$ and $\tilde{ \g}_{j,t}$ satisfies
		\begin{align}
			0<\uptheta_{0}\le\theta_{ij,t}\le \pi-\uptheta_{0}<\pi,\label{eq_theta_12}
		\end{align}
		where $\uptheta_{0}\in(0,\pi/2)$, then we have
		\begin{align*}
			\smin(\bar{\P}_{i,t}) \geq  \upalpha_{0} (1-\cos\uptheta_{0}).
		\end{align*}
	\end{lemma}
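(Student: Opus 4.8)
The plan is to reduce the full-neighborhood sum $\bar{\P}_{i,t}$ to the two-term estimate supplied by Lemma~\ref{lemma:sigma_min_P_12}, and then to convert the two-sided angle condition \eqref{eq_theta_12} into a uniform lower bound on $\smin(\P_{i,t}+\P_{j,t})$. Let $j^\ast\in(i\cup\N_{i,t})$ be an index for which the angle hypothesis \eqref{eq_theta_12} holds. Since $\theta_{ij^\ast,t}\ge\uptheta_0>0$ and the angle between $\tilde{\g}_{i,t}$ and itself is zero, we must have $j^\ast\ne i$; consequently both the self term $\alpha_{ii,t}\P_{i,t}$ and the neighbor term $\alpha_{ij^\ast,t}\P_{j^\ast,t}$ genuinely appear in the sum \eqref{eq_bar_P}.

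First I would isolate these two terms. Because every $\P_{j,t}=\I_3-\tilde{\g}_{j,t}\tilde{\g}_{j,t}^T$ is positive semi-definite and every weight $\alpha_{ij,t}$ is nonnegative, discarding all summands other than $j=i$ and $j=j^\ast$ only decreases the matrix in the positive semi-definite order. Combining this with $\alpha_{ii,t}\ge\upalpha_0$ and $\alpha_{ij^\ast,t}\ge\upalpha_0$ gives
\begin{align*}
\bar{\P}_{i,t}-\upalpha_0(\P_{i,t}+\P_{j^\ast,t})
=\!\!\sum_{j\ne i,\,j^\ast}\!\!\alpha_{ij,t}\P_{j,t}
+(\alpha_{ii,t}-\upalpha_0)\P_{i,t}
+(\alpha_{ij^\ast,t}-\upalpha_0)\P_{j^\ast,t}\ge 0,
\end{align*}
so that $\bar{\P}_{i,t}\ge\upalpha_0(\P_{i,t}+\P_{j^\ast,t})$ in the semi-definite order. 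Invoking the preliminary monotonicity fact (if $\A-\mathbf{B}\ge 0$ then $\smin(\A)\ge\smin(\mathbf{B})$) and pulling out the positive scalar $\upalpha_0$, I obtain $\smin(\bar{\P}_{i,t})\ge\upalpha_0\,\smin(\P_{i,t}+\P_{j^\ast,t})$, and Lemma~\ref{lemma:sigma_min_P_12} evaluates the remaining factor exactly as $1-|\cos\theta_{ij^\ast,t}|$.

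It then remains to bound $|\cos\theta_{ij^\ast,t}|$ from above using \eqref{eq_theta_12}. On the interval $[\uptheta_0,\pi-\uptheta_0]$ with $\uptheta_0\in(0,\pi/2)$, the map $\theta\mapsto|\cos\theta|$ attains its maximum at the two endpoints, where it equals $\cos\uptheta_0$; hence $|\cos\theta_{ij^\ast,t}|\le\cos\uptheta_0$ and $1-|\cos\theta_{ij^\ast,t}|\ge 1-\cos\uptheta_0$. Chaining the three inequalities yields $\smin(\bar{\P}_{i,t})\ge\upalpha_0(1-\cos\uptheta_0)$, which is the claim.

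Every step is elementary once Lemma~\ref{lemma:sigma_min_P_12} is available, so I do not expect a serious analytic obstacle. The one point deserving care is the endpoint analysis of $|\cos\theta|$: the two-sided bound \eqref{eq_theta_12} is precisely what rules out both the near-parallel ($\theta\to 0$) and near-antiparallel ($\theta\to\pi$) configurations, each of which would make $\P_{i,t}+\P_{j^\ast,t}$ singular and destroy the bound. The conceptual subtlety worth emphasizing is the PSD-dropping step: it lets a \emph{single} sufficiently transverse neighbor certify a uniform nondegeneracy bound on $\bar{\P}_{i,t}$ irrespective of the remaining bearing geometry, which is exactly the triangulation requirement that at least two observers view the target from different directions.
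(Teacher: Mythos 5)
Your proof is correct and follows essentially the same route as the paper's: lower-bound $\bar{\P}_{i,t}$ in the positive semi-definite order using $\alpha_{ij,t}\ge\upalpha_0$ and the fact that dropping PSD summands only decreases the matrix, reduce to $\upalpha_0(\P_{i,t}+\P_{j^\ast,t})$, apply Lemma~\ref{lemma:sigma_min_P_12}, and finish with $|\cos\theta_{ij^\ast,t}|\le\cos\uptheta_{0}$ from \eqref{eq_theta_12}. The only differences are cosmetic: you merge the weight bound and the term-dropping into a single PSD comparison where the paper does them in two steps, and your endpoint analysis of $|\cos\theta|$ is stated more carefully than the paper's prose (which contains a sign typo, writing $|\cos\theta_{ij,t}|\geq\cos\uptheta_{0}$ where $\leq$ is meant, though its displayed chain of inequalities is correct).
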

	\begin{proof}
		It follows from the definition of $\bar{\P}_{i,t}$ that
		\begin{align*}
			&\smin(\bar{\P}_{i,t})\\
			&\geq\upalpha_{0} \smin\left(\sum_{j\in (i\cup\N_{i,t})}\P_{j,t}\right)\quad\text{(due to $\alpha_{ij,t}\ge\upalpha_0$)}\\
			& \geq \upalpha_{0} \smin\left( \P_{i,t} +\P_{j,t}\right)\quad\quad \text{(only consider $i,j$)}\\
			& = \upalpha_{0} (1 - |\cos\theta_{ij,t}|)\quad\quad \text{(by Lemma~\ref{lemma:sigma_min_P_12})}\\
			& \geq \upalpha_{0} (1 - \cos\uptheta_{0}),
		\end{align*}
		where last inequality is because $|\cos\theta_{ij,t}|\geq \cos\uptheta_{0}$ due to \eqref{eq_theta_12}.
	\end{proof}
	
	On the other hand, we analyze $\smin\left(\sum_{t =1}^k\lambda_t^{(k)}\mathbf{F}_t^{(k)}\right)$.
	
	\begin{lemma}[Lower bound of $\smin\left(\sum_{t =1}^k\lambda_t^{(k)}\mathbf{F}_t^{(k)}\right)$ ]\label{lemma:sigma_F}
		It holds that
		\begin{align}
			\smin\left(\sum_{t =1}^k\lambda_t^{(k)}\mathbf{F}_t^{(k)} \right) \geq  \lambda_{k-1}^{(k)}f(\Delta t),\label{eq_sigma_min_f}
		\end{align}
		where $f(\Delta t)=\left(2+\Delta t^2-\sqrt{4 + \Delta t^4}\right)/2$.
	\end{lemma}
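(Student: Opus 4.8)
The plan is to exploit the fact that each $\mathbf{F}_t^{(k)}$ is a rank-one positive-semidefinite matrix, so the whole weighted sum dominates, in the positive-semidefinite order, any subset of its terms. Writing $\mathbf{F}_t^{(k)} = \mathbf{u}_t\mathbf{u}_t^T$ with $\mathbf{u}_t = [1,\ (t-k)\Delta t]^T$, one sees at once that $\mathbf{F}_t^{(k)} \succeq 0$, while $\lambda_t^{(k)} > 0$ for every $t$. Hence I would discard all terms except the two most recent ones, $t=k$ and $t=k-1$ (this presumes $k\ge 2$, the regime in which $\lambda_{k-1}^{(k)}$ is defined), to obtain
\[
\sum_{t=1}^k \lambda_t^{(k)}\mathbf{F}_t^{(k)} \succeq \lambda_k^{(k)}\mathbf{F}_k^{(k)} + \lambda_{k-1}^{(k)}\mathbf{F}_{k-1}^{(k)},
\]
since the dropped remainder $\sum_{t=1}^{k-2}\lambda_t^{(k)}\mathbf{F}_t^{(k)}$ is itself positive semidefinite.

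Next I would invoke the monotonicity $\lambda_{k-1}^{(k)} \le \lambda_k^{(k)}$ already established after \eqref{eq_value_lambda} (valid because $\|\A\|\ge 1$ and $\gamma_1\ge\gamma_2$), together with $\mathbf{F}_k^{(k)}\succeq 0$, to lower the coefficient of the first term from $\lambda_k^{(k)}$ to $\lambda_{k-1}^{(k)}$, giving
\[
\lambda_k^{(k)}\mathbf{F}_k^{(k)} + \lambda_{k-1}^{(k)}\mathbf{F}_{k-1}^{(k)} \succeq \lambda_{k-1}^{(k)}\bigl(\mathbf{F}_k^{(k)} + \mathbf{F}_{k-1}^{(k)}\bigr).
\]
Applying the monotonicity of $\smin$ under the positive-semidefinite order stated in Section~\ref{sec_problem_setup} (if $\A-\mathbf{B}\ge 0$ then $\smin(\A)\ge\smin(\mathbf{B})$) then reduces the whole problem to computing $\smin$ of the single fixed $2\times 2$ matrix $\mathbf{F}_k^{(k)}+\mathbf{F}_{k-1}^{(k)}$.

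Finally I would evaluate this constant matrix explicitly. Since $\mathbf{F}_k^{(k)} = \mathrm{diag}(1,0)$ and $\mathbf{F}_{k-1}^{(k)} = \left[\begin{smallmatrix} 1 & -\Delta t \\ -\Delta t & \Delta t^2 \end{smallmatrix}\right]$, their sum has trace $2+\Delta t^2$ and determinant $\Delta t^2$, so its smaller eigenvalue is $\bigl(2+\Delta t^2-\sqrt{(2+\Delta t^2)^2-4\Delta t^2}\bigr)/2 = \bigl(2+\Delta t^2-\sqrt{4+\Delta t^4}\bigr)/2 = f(\Delta t)$. Chaining the three displays yields $\smin\bigl(\sum_{t}\lambda_t^{(k)}\mathbf{F}_t^{(k)}\bigr)\ge\lambda_{k-1}^{(k)}f(\Delta t)$, which is \eqref{eq_sigma_min_f}.

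As for difficulty, there is no genuine obstacle: the lemma is a structural observation rather than a sharp inequality. The one point that must be gotten right is recognizing that $f(\Delta t)$ is \emph{precisely} $\smin$ of the two-term matrix $\mathbf{F}_k^{(k)}+\mathbf{F}_{k-1}^{(k)}$, which is what motivates keeping exactly the last two time steps and discarding the rest; once this is seen, the argument is just the positive-semidefinite truncation, the coefficient comparison via $\lambda_{k-1}^{(k)}\le\lambda_k^{(k)}$, and a routine $2\times 2$ eigenvalue computation.
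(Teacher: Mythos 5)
Your proof is correct and follows essentially the same route as the paper's: truncate the sum to the two most recent terms $t=k$ and $t=k-1$ using positive semidefiniteness, replace $\lambda_k^{(k)}$ by $\lambda_{k-1}^{(k)}$ via the monotonicity of the forgetting factor, and compute the smaller eigenvalue of $\mathbf{F}_k^{(k)}+\mathbf{F}_{k-1}^{(k)}=\left[\begin{smallmatrix}2 & -\Delta t\\ -\Delta t & \Delta t^2\end{smallmatrix}\right]$, which is exactly $f(\Delta t)$. Your write-up is slightly more explicit than the paper's (the rank-one factorization $\mathbf{F}_t^{(k)}=\mathbf{u}_t\mathbf{u}_t^T$ and the intermediate positive-semidefinite orderings are spelled out rather than implicit), but the argument is the same.
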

	
	\begin{proof}
		See Appendix~\ref{proof:sigma_F}.
	\end{proof}
	
	With Lemma~\ref{lemma:sigma_min_bar_P} and Lemma~\ref{lemma:sigma_F}, we are ready to analyze $\smin(\sum_{t =1}^k\lambda_t^{(k)}\S_{i,t}^{(k)})$.
	
	\begin{lemma}[Lower bound of $\smin(\sum_{t =1}^k\lambda_t^{(k)}\S_{i,t}^{(k)})$]\label{lemma:S_inv}
		Given $\lambda_t^{(k)}$ defined in \eqref{eq_value_lambda}, if $c$ satisfies
		\begin{align}
			c
			& \ge \frac{(\|\A\|(1+\gamma_1) -1)\|\A\|(1+\gamma_1)\sigma_{\boldsymbol{\nu}}^2}{\gamma_2f(\Delta t)\upalpha_{0} (1-\cos\uptheta_{0})},\label{eq_c}
		\end{align}
		then it holds that
		\begin{align*}
			\smin\left(\sum_{t =1}^k\lambda_t^{(k)}\S_{i,t}^{(k)}\right) \geq 1.
		\end{align*}
	\end{lemma}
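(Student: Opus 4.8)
The plan is to start from the decomposition in Lemma~\ref{lemma:exp_lambda_S}, which writes $\sum_{t=1}^k \lambda_t^{(k)}\S_{i,t}^{(k)}$ as the sum of a measurement term $\frac{c}{\sigma_{\boldsymbol{\nu}}^2}\sum_{t=1}^k \lambda_t^{(k)}\mathbf{F}_t^{(k)}\otimes\bar{\P}_{i,t}$ and a dynamics term $\sum_{t=1}^k \lambda_t^{(k)}(\A^{t-k})^T\A^{t-k}$. Both terms are symmetric positive semi-definite, so the property $\smin(\A+\mathbf{B})\ge\smin(\A)+\smin(\mathbf{B})$ for positive semi-definite matrices, stated in the preliminaries, lets me bound $\smin$ of the whole matrix below by the sum of the two individual lower bounds. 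I would then show that the measurement term alone contributes at least $\frac{c}{\sigma_{\boldsymbol{\nu}}^2}\upalpha_0(1-\cos\uptheta_0)\lambda_{k-1}^{(k)}f(\Delta t)$ while the dynamics term contributes at least $\lambda_k^{(k)}=1/(\|\A\|(1+\gamma_1))$, and finally impose the condition \eqref{eq_c} on $c$ so that the two contributions sum to at least $1$.

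For the measurement term, the first step is to note that each $\mathbf{F}_t^{(k)}$ in \eqref{eq_F} is positive semi-definite (its determinant vanishes and its trace is positive) and that, by Lemma~\ref{lemma:sigma_min_bar_P} applied at each time $t$, $\bar{\P}_{i,t}\succeq\upalpha_0(1-\cos\uptheta_0)\I_3$. The key manipulation is then the matrix inequality $\sum_{t}\lambda_t^{(k)}\mathbf{F}_t^{(k)}\otimes\bar{\P}_{i,t}\succeq\upalpha_0(1-\cos\uptheta_0)\left(\sum_t\lambda_t^{(k)}\mathbf{F}_t^{(k)}\right)\otimes\I_3$; it holds termwise because each difference $\mathbf{F}_t^{(k)}\otimes(\bar{\P}_{i,t}-\upalpha_0(1-\cos\uptheta_0)\I_3)$ is a Kronecker product of two positive semi-definite matrices and is therefore itself positive semi-definite. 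Using $\smin(\A\otimes\mathbf{B})=\smin(\A)\smin(\mathbf{B})$ from the preliminaries together with Lemma~\ref{lemma:sigma_F} then yields the claimed lower bound $\frac{c}{\sigma_{\boldsymbol{\nu}}^2}\upalpha_0(1-\cos\uptheta_0)\lambda_{k-1}^{(k)}f(\Delta t)$.

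For the dynamics term, I would simply discard every summand except $t=k$: since each $(\A^{t-k})^T\A^{t-k}$ is positive semi-definite, $\sum_t \lambda_t^{(k)}(\A^{t-k})^T\A^{t-k}\succeq\lambda_k^{(k)}\I_6$, and \eqref{eq_value_lambda} gives $\lambda_k^{(k)}=1/(\|\A\|(1+\gamma_1))$. Combining the two bounds and substituting $\lambda_{k-1}^{(k)}=\gamma_2/(\|\A\|(1+\gamma_1))^2$, the requirement that the total be at least $1$ reduces, after elementary algebra, to $\frac{c}{\sigma_{\boldsymbol{\nu}}^2}\upalpha_0(1-\cos\uptheta_0)\lambda_{k-1}^{(k)}f(\Delta t)\ge 1-\frac{1}{\|\A\|(1+\gamma_1)}$, which rearranges exactly into condition \eqref{eq_c}.

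I expect the main obstacle to be the Kronecker-product step: one cannot naively pull $\smin(\bar{\P}_{i,t})$ out of a \emph{sum} of Kronecker products, so the argument must pass through the termwise positive-semi-definiteness of $\mathbf{F}_t^{(k)}\otimes(\bar{\P}_{i,t}-\upalpha_0(1-\cos\uptheta_0)\I_3)$ before $\smin$ is taken, and it relies on the triangulation angle condition of Lemma~\ref{lemma:sigma_min_bar_P} holding at every time step $t=1,\dots,k$. A secondary subtlety is that the bound must be kept loose in precisely the right place, namely retaining the $t=k$ summand of the dynamics part rather than discarding it entirely, because it is exactly this $1/(\|\A\|(1+\gamma_1))$ contribution that produces the $-1$ appearing in the numerator of \eqref{eq_c}.
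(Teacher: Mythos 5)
Your proposal is correct and follows essentially the same route as the paper's own proof: the paper likewise sets $\P_0=\upalpha_0(1-\cos\uptheta_0)\I_3$, establishes the termwise positive semi-definiteness of $\mathbf{F}_t^{(k)}\otimes(\bar{\P}_{i,t}-\P_0)$ before taking $\smin$, pulls out $\smin$ of the Kronecker product via Lemma~\ref{lemma:sigma_F}, and keeps only the $t=k$ summand $\lambda_k^{(k)}\I_6$ of the dynamics term. The closing algebra matching $1-\lambda_k^{(k)}$ against the measurement contribution to recover condition \eqref{eq_c} is also identical to the paper's.
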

	
	\begin{proof}
		See Appendix~\ref{proof:S_inv}.
	\end{proof}
	
	Finally, we can prove main result that $\|\E[\Eta_{i,t}]\|$ converges.
	
	\begin{theorem}[Convergence of $\|\E(\Eta_{i,t})\|$]\label{theorme_E_eta}	
		Given $\lambda_{t}^{(k)}$ defined in \eqref{eq_value_lambda}, if \eqref{eq_value_alpha} and \eqref{eq_theta_12} holds and $\gamma_1>\gamma_2$, then $\|\E[\Eta_{i,k}]\|$ converges to zero exponentially fast for all $i \in \V$ as $k\rightarrow\infty$.
	\end{theorem}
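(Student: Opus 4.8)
The plan is to combine the three facts already assembled in the excerpt — the closed-form expression for $\E[\Eta_{i,k}]$ in Lemma~\ref{lemma:exp_eta}, the inequality \eqref{eq_E_eta_leq_E_bar_eta}, and the lower bound $\smin(\sum_{t=1}^k\lambda_t^{(k)}\S_{i,t}^{(k)})\ge1$ from Lemma~\ref{lemma:S_inv} (valid once $c$ satisfies \eqref{eq_c}) — to reduce the problem to bounding $\|\E[\bar{\Eta}_{i,k}]\|$, and then to set up a single scalar recursion across the whole network that decays geometrically. First I would apply Lemma~\ref{lemma:S_inv} to \eqref{eq_E_eta_leq_E_bar_eta}, which immediately yields the clean bound $\|\E[\Eta_{i,k}]\|\le\|\E[\bar{\Eta}_{i,k}]\|$.

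Second, I would expand $\bar{\Eta}_{i,k}$ using its definition \eqref{eq_bar_eta}, apply the triangle inequality and submultiplicativity of the spectral norm, and use $\sum_{j\in(i\cup\N_{i,t})}\beta_{ij,t}=1$. The key quantitative step is to bound the coefficient $\lambda_t^{(k)}\|\A^{t-k}\|\,\|\A\|$. Here I would exploit the structure of the double-integrator matrix: since $\sigma_{\max}(\A)\smin(\A)=1$ for \eqref{eq_process_matrix_A}, we have $\|\A^{-1}\|=\|\A\|$ and therefore $\|\A^{t-k}\|=\|(\A^{-1})^{k-t}\|\le\|\A\|^{k-t}$. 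Substituting the definition of $\lambda_t^{(k)}$ in \eqref{eq_value_lambda}, the factors of $\|\A\|$ telescope and leave
\begin{align*}
\lambda_t^{(k)}\|\A^{t-k}\|\,\|\A\|\le\frac{1}{1+\gamma_1}\left(\frac{\gamma_2}{1+\gamma_1}\right)^{k-t}.
\end{align*}
Writing $a\doteq\gamma_2/(1+\gamma_1)$, $b\doteq1/(1+\gamma_1)$, and $M_s\doteq\max_{i\in\V}\|\E[\Eta_{i,s}]\|$, the bound on each observer becomes $\|\E[\Eta_{i,k}]\|\le b\sum_{t=1}^k a^{k-t}M_{t-1}$, and taking the maximum over $i\in\V$ decouples the network topology and produces the scalar recursion $M_k\le b\sum_{t=1}^k a^{k-t}M_{t-1}$.

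Third, I would solve this convolution inequality. Defining the partial sum $S_k\doteq\sum_{s=0}^{k-1}a^{k-1-s}M_s$, one checks $S_{k+1}=aS_k+M_k$; combining with $M_k\le bS_k$ gives $S_{k+1}\le(a+b)S_k$, hence $S_k\le(a+b)^{k-1}S_1=(a+b)^{k-1}M_0$, and finally $M_k\le b(a+b)^{k-1}M_0$. Since $a+b=(1+\gamma_2)/(1+\gamma_1)$, the assumption $\gamma_1>\gamma_2$ is precisely what makes $a+b<1$, so $M_k$ — and hence each $\|\E[\Eta_{i,k}]\|$ — decays to zero at the geometric rate $(a+b)^{k-1}$, establishing exponential convergence.

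I expect the main obstacle to be the second step rather than the recursion: correctly handling the inverse powers $\A^{t-k}$ and verifying that they combine with the forgetting factor to produce a summable geometric weight. This hinges on the special spectral property $\|\A^{-1}\|=\|\A\|$ of the double integrator; without it, the negative powers of $\A$ could grow and the coefficient would not be controllable. The recursion itself is routine once the $S_k$ telescoping identity is spotted, and the interconnection among observers is handled cleanly by passing to the network-wide maximum $M_k$, which is why the argument requires no assumption on connectivity of the time-varying graph $\G_k$.
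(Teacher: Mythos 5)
Your proposal is correct and follows essentially the same route as the paper: reduce to $\|\E[\Eta_{i,k}]\|\le\|\E[\bar{\Eta}_{i,k}]\|$ via Lemma~\ref{lemma:S_inv}, exploit the geometric structure that the forgetting factor \eqref{eq_value_lambda} imprints on $\bar{\Eta}_{i,k}$ together with $\|\A^{-1}\|=\|\A\|$ and $\sum_{j}\beta_{ij,t}=1$, and contract the network-wide maximum at the rate $(1+\gamma_2)/(1+\gamma_1)$. The only difference is bookkeeping: the paper derives an exact one-step recursion for the vector $\E[\bar{\Eta}_{i,k}]$ before taking norms (your identity $S_{k+1}=aS_k+M_k$ is precisely the scalar shadow of that recursion), whereas you take norms first and solve the resulting scalar convolution inequality; the two computations are equivalent and yield the same exponential rate.
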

	
	\begin{figure*}[t]
		\centering
		\includegraphics[width=0.9\linewidth]{ 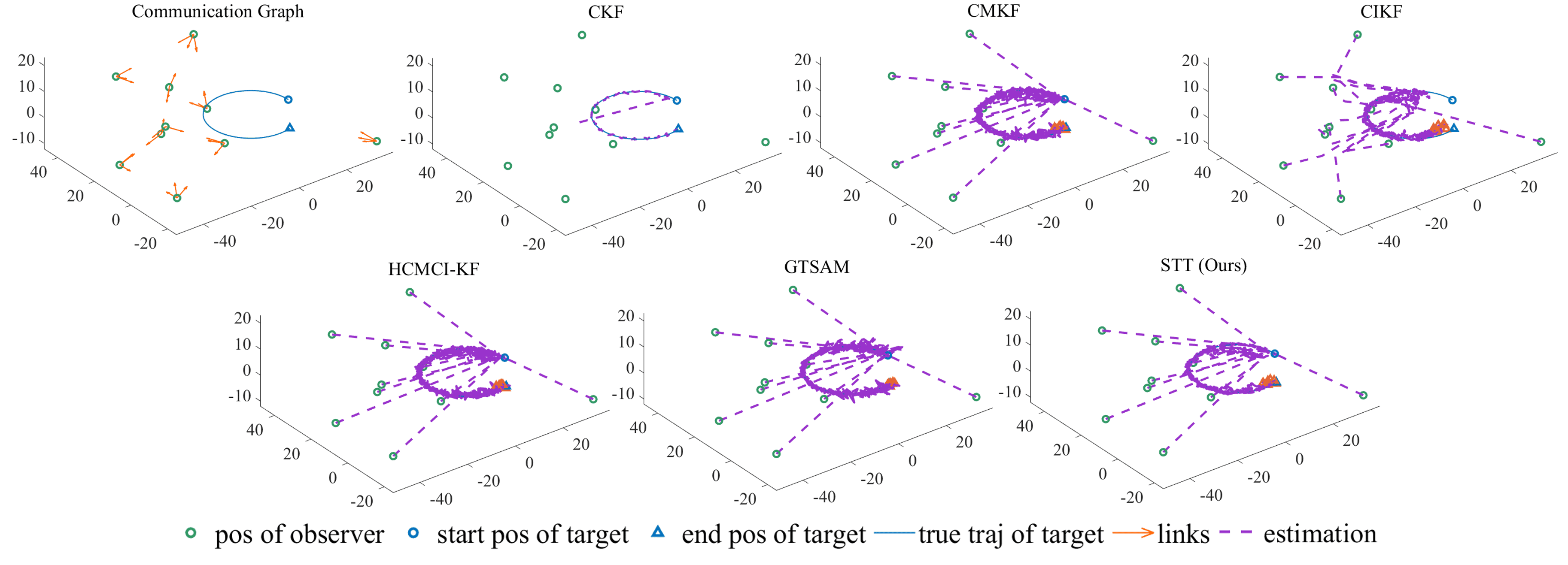}
        \vspace{-0.3cm}
		\caption{Estimation results of CMKF, CIKF, HCMCI-KF, GTSAM, and STT algorithms in a simulation trial. The target moves along a circle. There are 10 observers, and each of them can obtain information from the three closest neighboring observers. The topology of the network is shown in the up-leftmost subfigure.}
		\label{fig_motion_sim}
	\end{figure*}
	
	\begin{proof}\label{proof_7}
		Since $\smin\left(\sum_{t =1}^k\lambda_t^{(k)}\S_{i,t}\right) \geq 1$ according to Lemma~\ref{lemma:S_inv}, it follows from \eqref{eq_E_eta_leq_E_bar_eta} that
		\begin{align}\label{eq_eta_bar_eta}
			\|\E[\Eta_{i,k}]\| \leq \|\E[\bar{\Eta}_{i,k}]\| \leq \max_{j\in\V}\|\E[\bar{\Eta}_{j,k}]\|,
		\end{align}
		where $\V=\{1,\dots,n\}$.
		With the definition of $\bar{\Eta}_{i,k}$ in \eqref{eq_bar_eta}, we have
		\begin{align}
			& \E[\bar{\Eta}_{i,k}]
			 =  \lambda_{k}^{(k)} \sum_{j\in(i\cup\N_{i,k})}\beta_{ij,k-1}\A\E[\Eta_{j,k-1}] \nonumber\\
			&\quad +  \sum_{t=1}^{k-1}\lambda_{t}^{(k)}(\A^T)^{t-k}\sum_{j\in(i\cup\N_{i,t})}\beta_{ij,t-1}\A \E[\Eta_{j,t-1}]\nonumber\\
			& =  \lambda_{k}^{(k)} \sum_{j\in(i\cup\N_{i,k})}\beta_{ij,k-1}\A\E[\Eta_{j,k-1}] \nonumber\\
			& \quad+  \frac{\gamma_2}{\|\A\|(1+\gamma_1)}(\A^T)^{-1}\E[\bar{\Eta}_{i,k-1}],\label{eq_E_bar_Eta}
		\end{align}
  \vspace{-0.3cm}
		where
  \vspace{-0.3cm}
		\begin{align*}
			\E[\bar{\Eta}_{i,k-1}] & = \sum_{t=1}^{k-1}\lambda_{t}^{(k-1)}(\A^T)^{t-(k-1)}\\
			& \quad \cdot \sum_{j\in(i\cup\N_{i,t})}\beta_{ij,t-1}\A \E[\Eta_{j,t-1}].
		\end{align*}
		Taking the norm on both sides of \eqref{eq_E_bar_Eta} gives
		\begin{align*}
			\|\E[\bar{\Eta}_{i,k}]\| &  \leq \frac{1}{(1+\gamma_1)\|\A\|} \sum_{j\in(i\cup\N_{i,k})}\beta_{ij,k}\|\A\|\|\E[\Eta_{j,k-1}]\|\\
			& \quad \quad +\frac{\gamma_2}{(1+\gamma_1)\|\A\|}\|(\A^T)^{-1}\|\|\E[\bar{\Eta}_{i,k-1}]\|.
		\end{align*}
		Since $\sum_{j \in(i\cup\N_{i,k})}\beta_{ij,k} = 1$ and \\$\|\E[\bar{\Eta}_{i,k-1}]\| \leq \max_{j\in\V}\|\E[\bar{\Eta}_{j,k-1}]\|$, substituting \eqref{eq_eta_bar_eta} into the above inequality yields
		\begin{align*}
			\|\E[\bar{\Eta}_{i,k}]\|
			& \le \frac{1}{(1+\gamma_1)}\max_{j\in\V}\|\E[\bar{\Eta}_{j,k-1}]\|\\
			& \qquad  +\frac{\gamma_2}{(1+\gamma_1)\|\A\|}\|(\A^T)^{-1}\|\max_{j\in\V}\|\E[\bar{\Eta}_{j,k-1}]\|.
		\end{align*}
		It can be verified that the matrix $\A$ defined in \eqref{eq_process_matrix_A} satisfies $\|(\A^T)^{-1}\|=\|\A\|$. Then, we have
		\begin{align}
			\|\E[\bar{\Eta}_{i,k}]\|  &  \leq \frac{1}{(1+\gamma_1)}\max_{j\in\V}\|\E[\bar{\Eta}_{j,k-1}]\|\nonumber\\
			& \quad  +\frac{\gamma_2}{(1+\gamma_1)}\max_{j\in\V}\|\E[\bar{\Eta}_{j,k-1}]\|\nonumber\\
			& \leq \frac{1+\gamma_2}{1+\gamma_1}\max_{j\in\V}\|\E[\bar{\Eta}_{j,k-1}]\|.\label{eq_e_bar_eta}
		\end{align}
		Since the above inequality holds for any $i\in\V$ we have
		\begin{align}
			\max_{i\in\V}\|\E[\bar{\Eta}_{i,k}]\|
			& \leq \frac{1+\gamma_2}{1+\gamma_1}\max_{j\in\V}\|\E[\bar{\Eta}_{j,k-1}]\|.\nonumber\\
		\Rightarrow
			\max_{i\in\V}\|\E[\bar{\Eta}_{i,k}]\|    &  \leq \left(\frac{1 + \gamma_2}{1+\gamma_1}\right)^{k-1} \max_{j\in\V}\|\E[\bar{\Eta}_{j,1}]\|.\label{eq_exponentialConvergenceRate}
		\end{align}
  
		If $\gamma_1>\gamma_2$, then $(1+\gamma_2)/(1+\gamma_1)<1$ and hence $\max_{i\in\V}\|\E[\bar{\Eta}_{i,k}]\|\rightarrow0$ as $k\rightarrow\infty$.
		Since $\|\E[{\Eta}_{i,k}]\|\le\max_{i\in\V}\|\E[\bar{\Eta}_{i,k}]\|$ for all $i\in\V$, we know that $\|\E[{\Eta}_{i,k}]\|$ also converges to zero exponentially fast. 		\qed
	\end{proof}
	
	Regarding the convergence rate, it can be seen from \eqref{eq_exponentialConvergenceRate} that the convergence rate is determined by $\gamma_1,\gamma_2$, which are the two parameters in the forgetting factor defined in \eqref{eq_value_lambda}. When $\gamma_2$ is much less than $\gamma_1$, and hence the forgetting factor is small, then the convergence rate is also small, and hence the convergence is fast.
	It is worth mentioning that the network connectivity condition for Theorem~\ref{theorme_E_eta} is mild. It is valid when each agent is linked with at least one neighbor even though the entire network is not fully connected.
    
	It should be noticed that the properties of $\A$ have been used in the proofs of Lemma~\ref{lemma:sigma_F} and Proof~\ref{proof_7}. Specifically, it is required that the conditions in \eqref{eq_sigma_min_f} and \eqref{eq_e_bar_eta} are valid. The state matrix $\A$ can be replaced by other linear models, such as constant acceleration and constant jerk,  and the convergence result is still valid as long as these conditions are satisfied.
 
	\section{Numerical Simulation}\label{sec_simulation}
	
	In this section, the performance of the proposed STT algorithm is evaluated and compared with the benchmark algorithms, including CMKF \cite{olfati2009kalman}, CIKF \cite{olfati2005consensus}, HCMCI-KF \cite{battistelli2014consensus}, and GTSAM \cite{dellaert2012factor}.
	A benchmark algorithm, the centralized Kalman filter (CKF), is also used as a baseline for comparison. We will see that the proposed STT algorithm achieves better performance than the algorithms and comparable performance to the CKF. The simulation code has been put on our GitHub homepage: \href{https://github.com/WestlakeIntelligentRobotics/spatial-temporal-triangulation-for-bearing-only-cooperative-motion-estimation}{https://github.com/WestlakeIntelligentRobotics/spatial-temporal-triangulation-for-bearing-only-cooperative-motion-estimation}.
	
	\subsection{Simulation setup}
	
	Consider a 3D cube with the side lengths of $60\times 60\times 40$.
	In every simulation trial, a set of $n=10$ observers are randomly placed inside the 3D cube. The initial estimate of every observer is selected as $\hat{\x}_{i,0} = \s_{i,0}$.
	Supposing that each observer can always measure the moving target's bearing and each observer can only obtain the information from its three nearest neighbors. The locations of the observers and the network topology are invariant in each simulation trial. They are, however, different across different simulation trials.
	
	The parameters of the STT and DKF algorithms are all optimized by the genetic algorithm (GA) toolbox in Matlab to ensure a fair comparison.
	For the STT algorithm, the optimized parameter values are $c  = 1.8202$, $\gamma_1 = 7.1609$, and $\gamma_2 = 6.1323$. Moreover, $\alpha_{ij,k}  = \beta_{ij,k}= 1/(1+|\N_{i,k}|) = 0.25$ since each observer has three neighbors (i.e., $|\N_{i,k}|=3$). Regarding the GTSAM algorithm, we use the GTSAM library in Matlab. 
	Due to space limitations, the detailed algorithms of the DKFs are omitted here.
	
	\subsection{Evaluation results: Accuracy and convergence rate}
	
	Here, we consider two simulation scenarios.
	In the first scenario, we suppose the target moves along a circle. This scenario is important to study because the target's motion has a nonzero time-varying acceleration and is different from the assumed dynamic model in \eqref{eq_process_matrix_A}. The initial position of the target is $\mathbf{p}_{0} = [15,0,5]^T$, and its velocity is $\mathbf{v}(t) = 5[\sin(t/10\pi),\cos(t/10\pi),0]^T$.
	In the second scenario, we suppose the target moves along a square. It is important to study this scenario because the target has abrupt motion at the corners, which is important to evaluate the algorithms' convergence rate.
	In this scenario, the target's initial position is $\mathbf{p}_{0} = [20,20,5]^T$, and the initial velocity is $\mathbf{v}(0) =[0,-6,0]^T$. The velocity direction rotates clockwise by $90^{\circ}$ every six seconds.
	
	A zero-mean Gaussian noise is added to each bearing measurement based on \eqref{eq_measuredBearingVector} in the simulations. The standard deviation is $0.1~\text{rad} \approx 5.73^{\circ}$.
	In the following, each simulation result is an average of 100 simulation trials.
	An example of the network topology and the estimation trajectories are shown in Fig.~\ref{fig_motion_sim}.
	The estimation errors in terms of average root-mean-square-error (RMSE) are shown in Fig.~\ref{fig_circle_motion_tracking_simulaiton_result} and Fig.~\ref{fig_square_motion_tracking_simulaiton_result}, respectively.
	
    Compared to other benchmark algorithms, STT achieves \emph{smaller} position and velocity estimation errors. 
    The convergence rate is also \emph{faster}. This is important because it indicates that STT can better track maneuverable targets.
	
	\begin{figure*}
		\centering
		\subfloat[Estimation error of different algorithms in the scenario where the target moves along a circle.]{
		\label{fig_circle_motion_tracking_simulaiton_result}
		\includegraphics[width=0.46\linewidth]{ 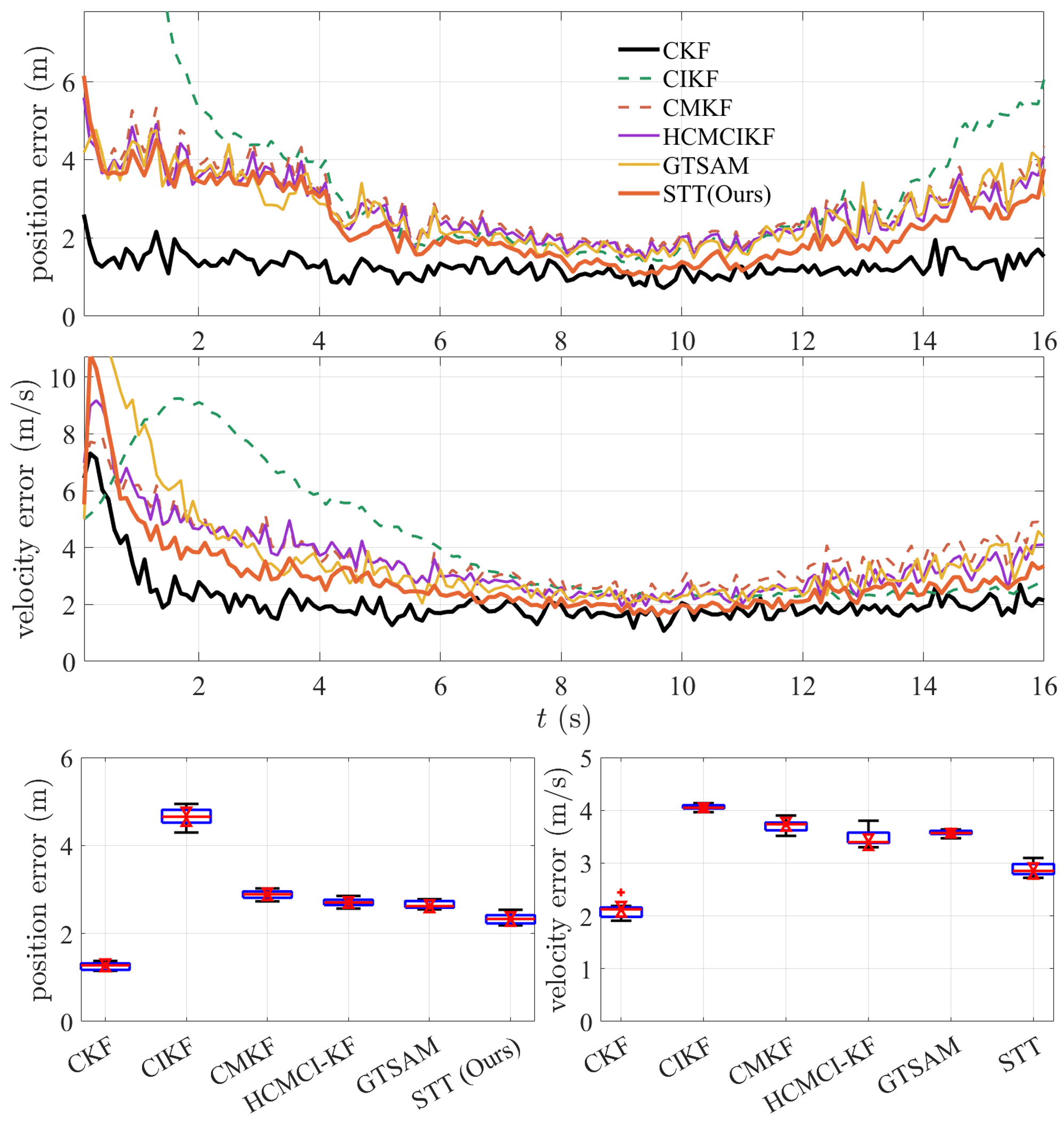}}\quad
		\subfloat[Estimation error of different algorithms in the scenario where the target moves along a square shape.]{
		\label{fig_square_motion_tracking_simulaiton_result}
		\includegraphics[width=0.47\linewidth]{ 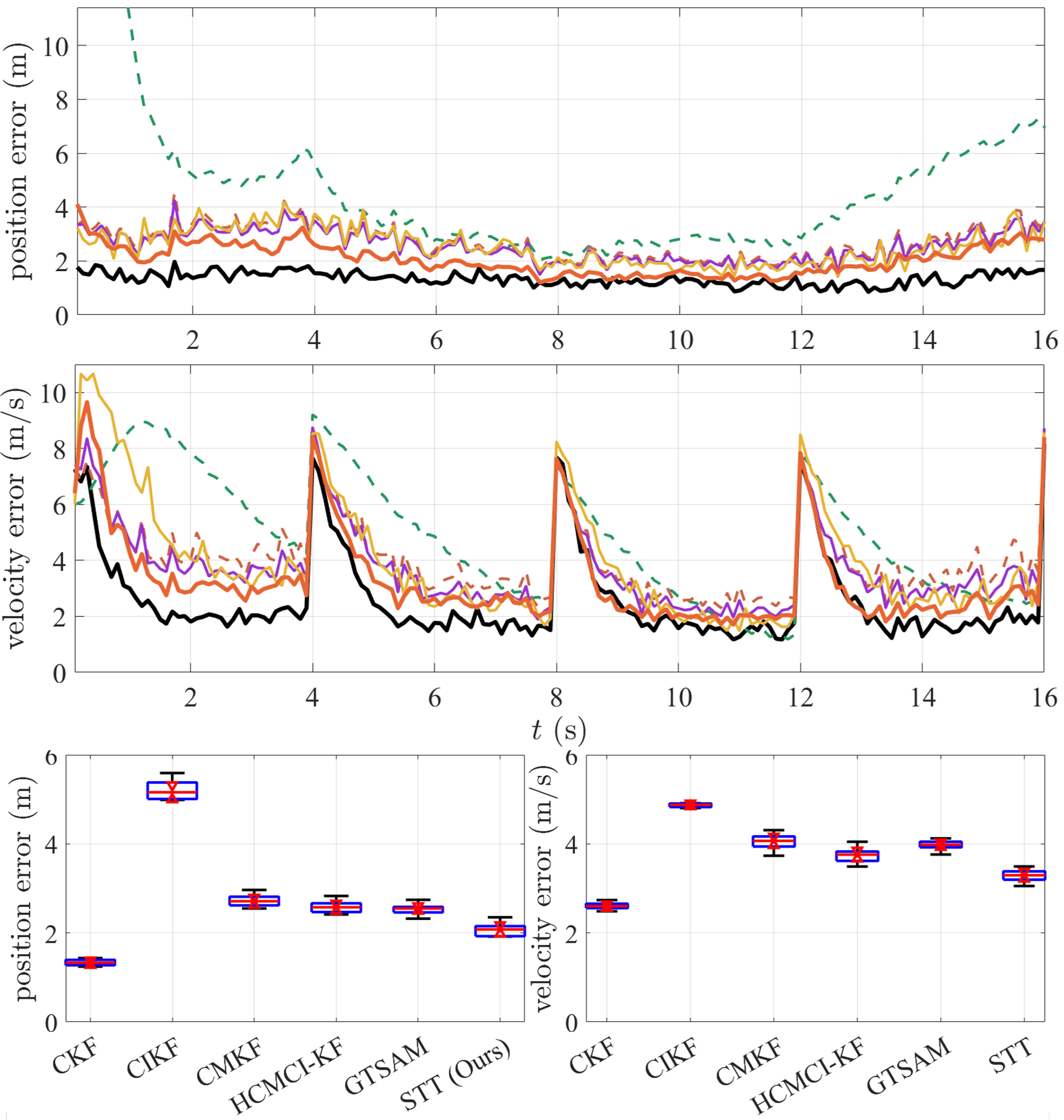}}
	\caption{The performance of different algorithms in estimating the target's state with circle motion and square motion, respectively.}
	\end{figure*}

	\subsection{Evaluation results: Influence of noise}
	
	We now evaluate the influence of the measurement noises on the estimation accuracy.
	We consider the case where the target moves along a square.
	Here, the standard deviation of the zero-mean Gaussian noise added to each bearing measurement varies from $0.01$ rad to $0.3$ rad.
	Each simulation result is an average of 100 times simulation trials.
	
	The simulation results are shown in Fig.~\ref{fig:measure_errors_RMSE}.
	It can be seen that the accuracy for both position and velocity estimation drops as the noise's standard deviation increases. However, STT has the best accuracy across different noise levels compared to the other algorithms.
	
	\begin{figure}[t]
		\centering
        \includegraphics[width=1\linewidth]{ 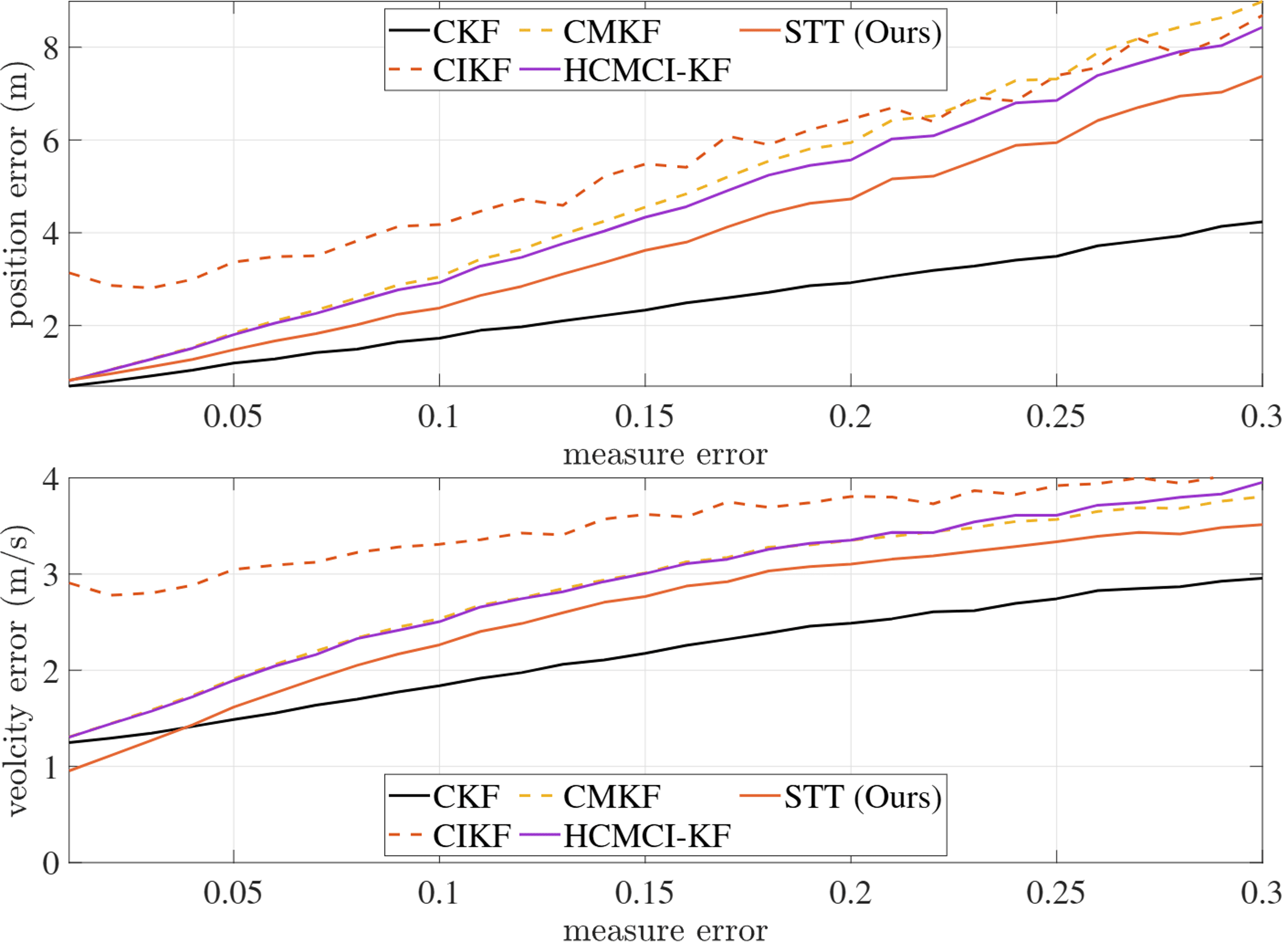}
		\caption{The influence of measure noises on the estimation of different algorithms.}
		\label{fig:measure_errors_RMSE}
	\end{figure}
	
	\subsection{Evaluation results: Computational efficiency}
	
	We further compare the computational efficiency of STT with respect to CIKF, CMKF, HCMCI-KF, and GTSAM. Specifically, in the simulation, the target moves along a circle, and the initial positions of the observers are randomly selected for each simulation trial. We conducted 1,000 simulation trials. The results are presented in Table~\ref{tab_computatioal_time}.
	
	As can be seen from Table~\ref{tab_computatioal_time}, the computational time of the CMKF algorithm is the shortest since it requires the least interaction.
	GTSAM's computational cost exceeds that of other algorithms since it performs optimization in every time step.
	As a comparison, the proposed STT algorithm has slightly greater computational time than CMKF but less than CIKF, HCMCI-KF, and GTSAM. This is because STT requires more information exchange of the estimated state with neighbors than CMKF and has a less computational load of the covariance matrix information than CIKF and HCMCI-KF.
    
\begin{table}[h]
	\caption{The average computational time of different algorithms per time step.}\label{tab_computatioal_time}
	\centering
	\begin{tabular}{c|c}
		\hline
		Algorithms & Computational times (s) \\\hline
		CIKF & 0.66986   \\
		CMKF &  0.16762  \\
		HCMCI-KF & 0.83729   \\
		GTSAM & 14.184\\
		STT & 0.33512 \\ \hline
	\end{tabular}
\end{table}

	\section{Real-World Experiments}\label{sec_experiment}
	
	We implemented the proposed STT algorithm in a real-world vision-based cooperative aerial target pursuit system to verify its effectiveness under practical conditions.
	
	\begin{table}[t]
		\centering
		\caption{Key specifications of the pursuer MAV}
		\begin{tabular}{lll}
			\hline
			Parameter                                       & Value                            & Unit                          \\ \hline
			{\color[HTML]{333333} Dimension}    & {\color[HTML]{333333} 810x670}   & {\color[HTML]{333333} mm}     \\
			{\color[HTML]{333333} Mass}                     & {\color[HTML]{333333} 6.3}       & {\color[HTML]{333333} kg}     \\
			{\color[HTML]{333333} FOV of the gimbal camera} & {\color[HTML]{333333} 82.9}      & {\color[HTML]{333333} degree} \\
			{\color[HTML]{333333} Video resolution}         & {\color[HTML]{333333} 1920x1080} & {\color[HTML]{333333} pix}    \\
			{\color[HTML]{333333} Frame rate}               & {\color[HTML]{333333} 30}        & {\color[HTML]{333333} FPS}    \\
			{\color[HTML]{333333} Communicate rate}         & {\color[HTML]{333333} 24}        & {\color[HTML]{333333} Hz}    \\ \hline
		\end{tabular}
		\label{tab:key_specification}
	\end{table}
	
	The system consists of three pursuer MAVs (DJI M300) and one target MAV (DJI Phantom 4). See Fig.~\ref{fig_purser_system} for illustration.
	Each pursuer MAV uses its onboard camera to detect, locate, and follow the target MAV.
	The key specifications of each pursuer MAV are shown in Table~\ref{tab:key_specification}.
	
	The overall system is fully autonomous, and all the functions are realized onboard. The detection, estimation, and control components form a closed loop, where the former one's output is the latter one's input.
	To the best of our knowledge, it is also the first fully autonomous vision-based cooperative aerial target pursuit system reported in the literature.
	
	\begin{figure}
		\centering
		\subfloat[Three pursuer MAVs (DJI300) and one target MAV (DJI Phantom 4)]{
			\includegraphics[width=\linewidth]{ 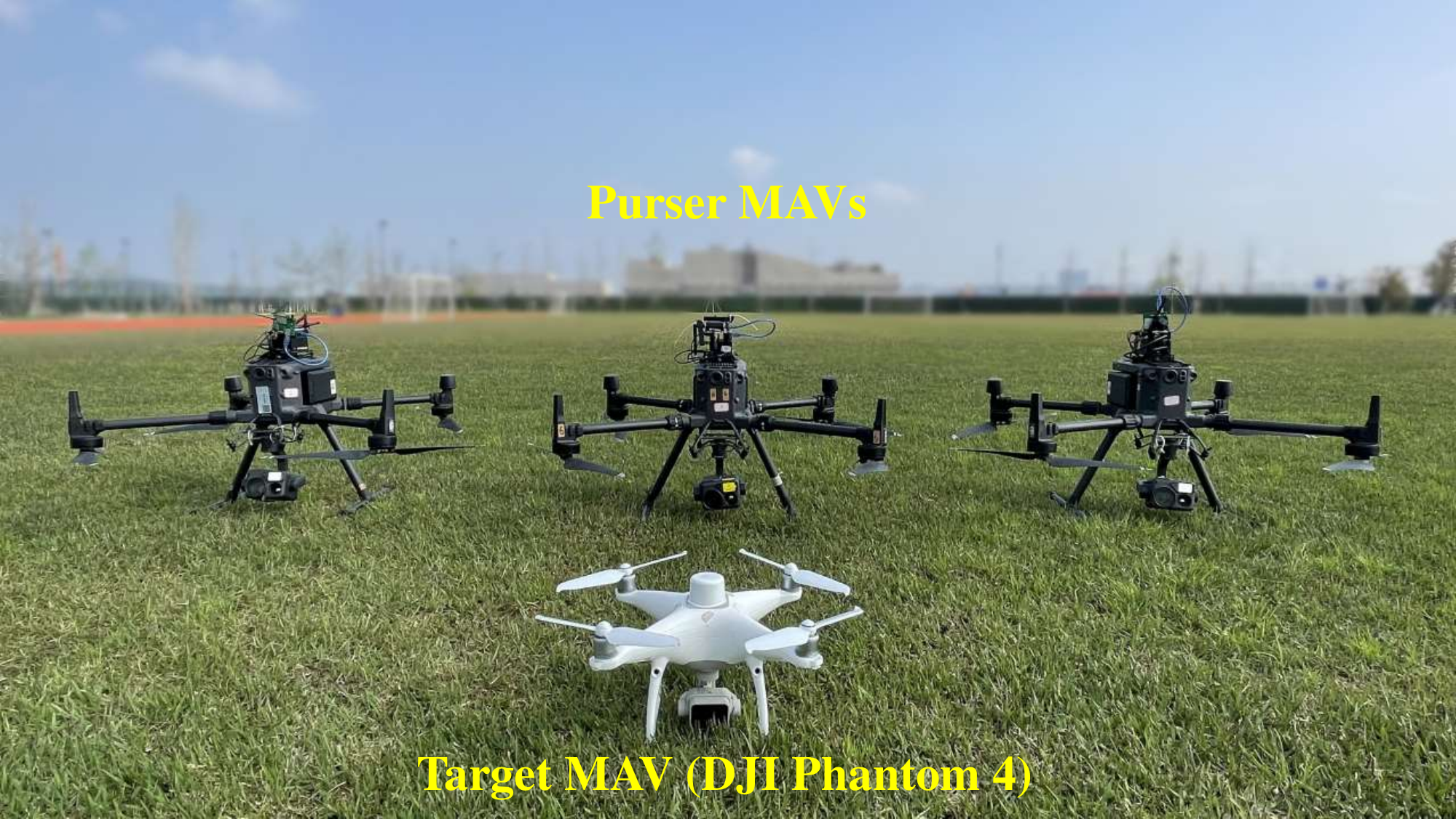}
			\label{fig_purser_system}
		}\\
		\vspace{-0.3cm}
		\subfloat[Onboard components of each pursuer MAV]{
			\includegraphics[width=\linewidth]{ 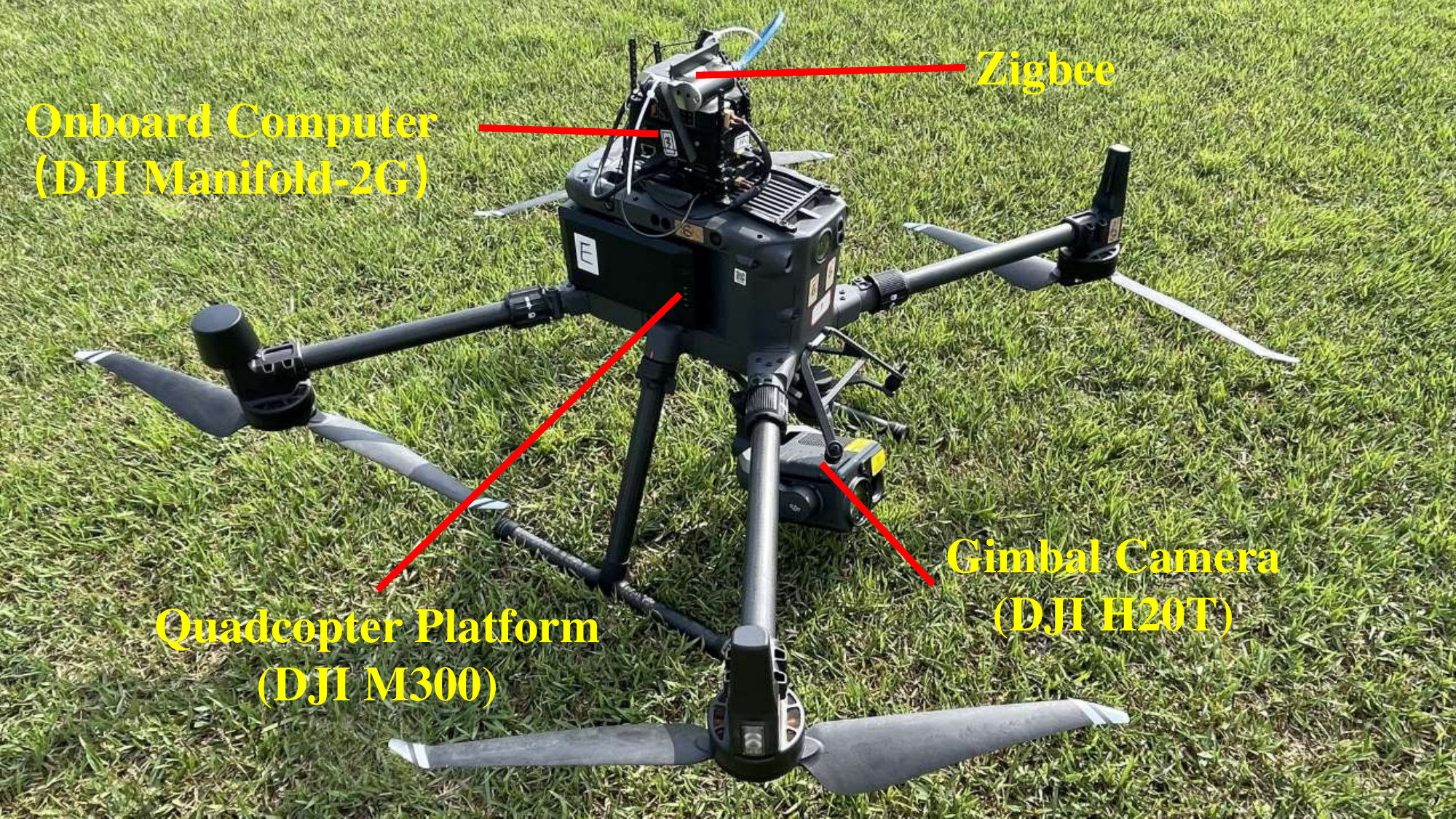}
		}\\
		\vspace{-0.3cm}
		\subfloat[The desired formation shape]{
			\includegraphics[width=1\linewidth, trim=0 70 0 0, clip]{ 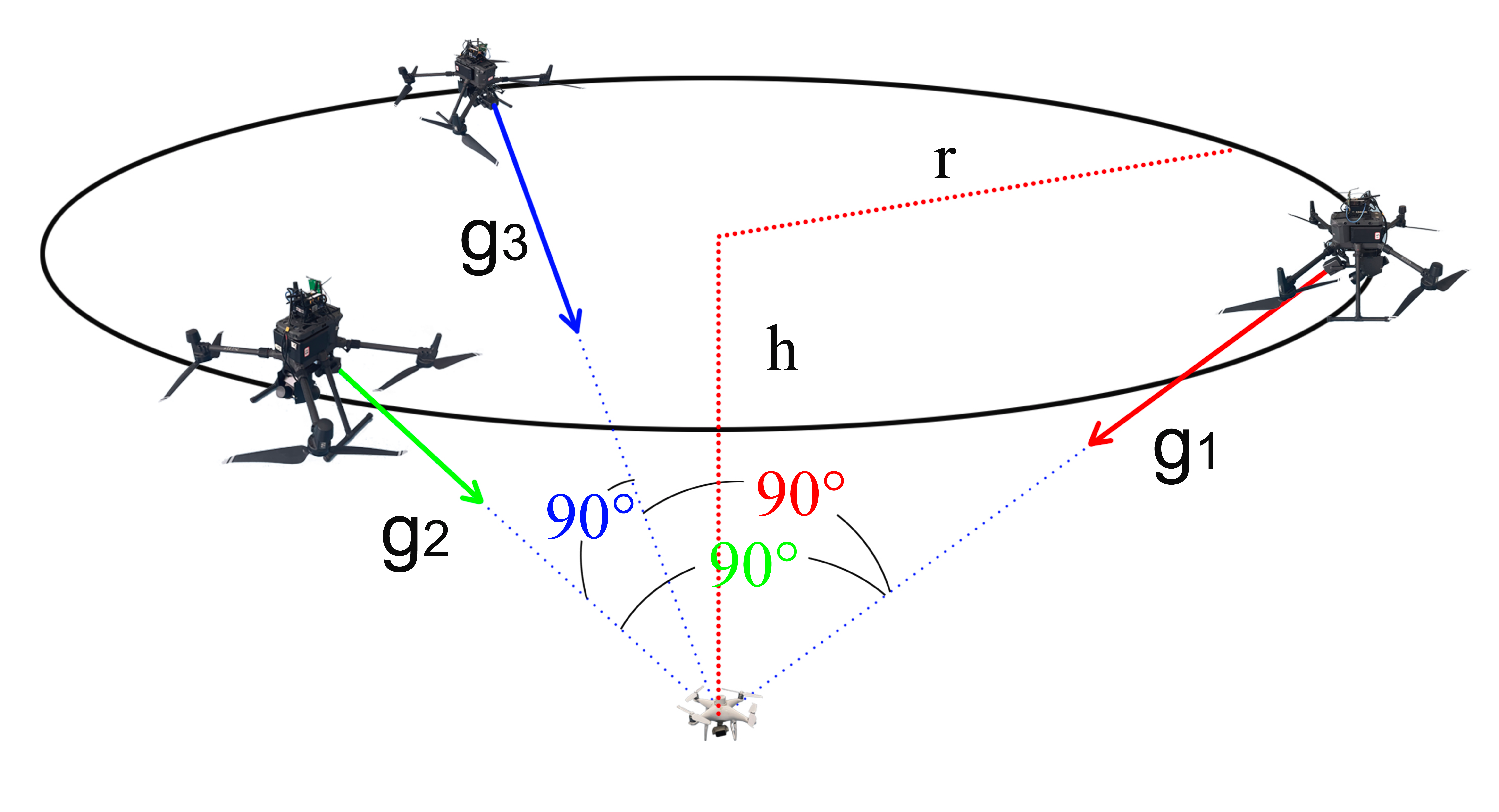}
			\label{fig:capture_MAV_formation}
		}\\
		\vspace{-0.3cm}
		\subfloat[Flight experimental scenario]{
			\includegraphics[width=\linewidth]{ 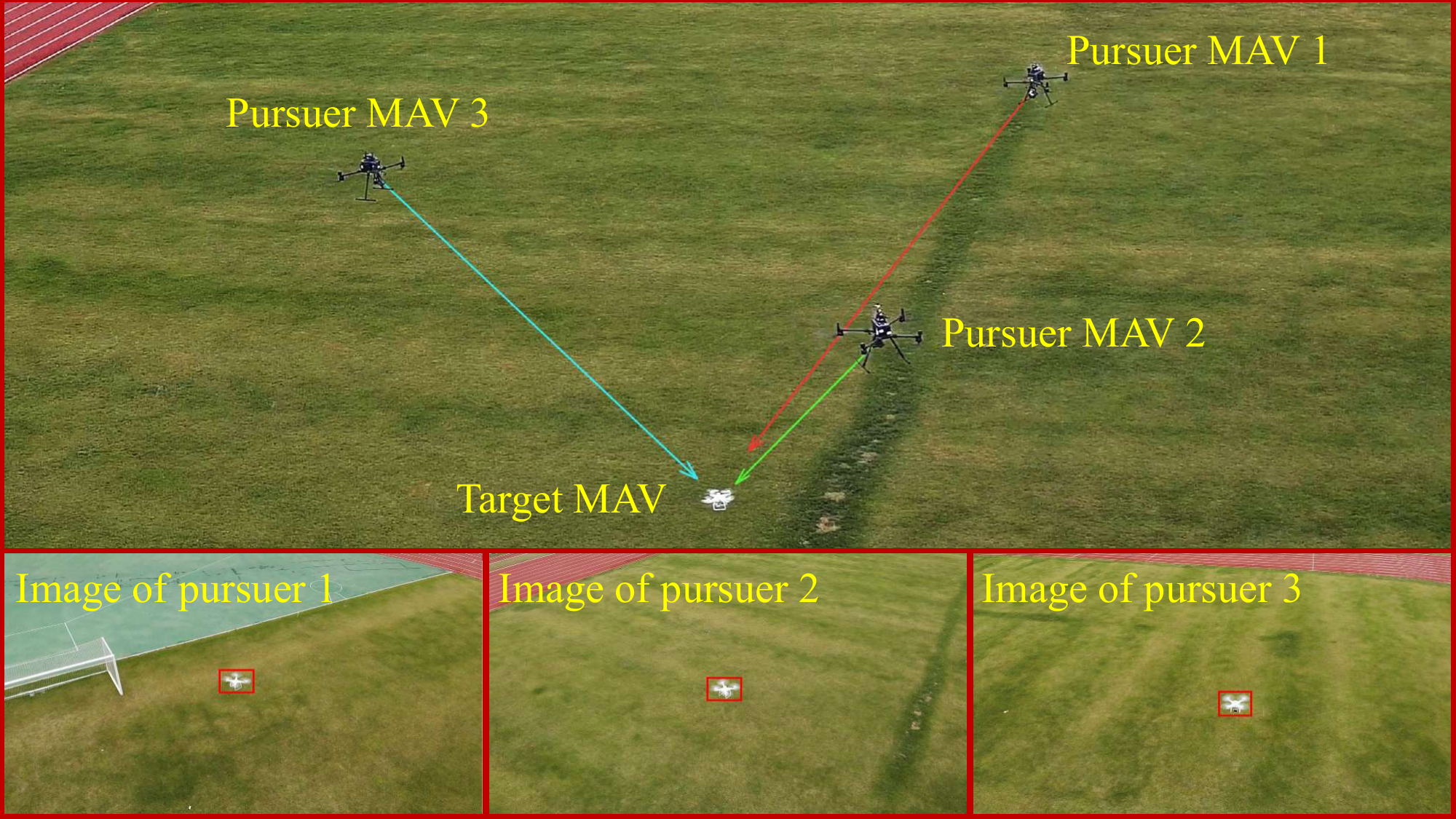}
			\label{fig:fig_experiment_image}
		}\\
		\vspace{-0.3cm}
		\caption{The MAV platforms and flight experimental scenario.}
	\end{figure}
	
	The system architecture of each pursuer MAV consists of the following modules.
	
	1) The first module is a vision-based target detector. Each pursuer MAV has a DJI H20T gimbal camera to search the target MAV.
	We trained a YOLO-based detector (YOLOv5s) using a dataset collected by ourselves. This dataset has more than 20,000 images collected in real-world MAV detection scenarios. We use the YOLO-based detector because it can achieve a good balance between accuracy and speed, as verified by our previous studies \cite{zheng2021air, zheng2022detection}. The accuracy of the trained detector can reach more than 0.9 in the test scenarios.
	The detection frequency is 20~Hz.
	Although the target detector may fail occasionally, the proposed STT algorithm can well handle these non-idealities.
	
	Once the target has been detected in an image, the bearing vector pointing from the purser to the target can be calculated based on its pixel coordinate and the pin-hole camera model \cite[Section~VII]{li2022three}. Additionally, we implement a PI control law to control the gimbal to keep the target at the center of the field-of-view (FOV) of the camera.
	
	2) The second module is state estimation. Once the bearing vector of the target is obtained from vision, each MAV would use the proposed STT algorithm to fuse the information itself and the information shared by its neighbors to estimate the target's motion. The STT algorithm is deployed on the Manifold2G onboard computer, embedded in a robotic operating system (ROS) framework.
	
	3) The third module is wireless communication. During the flight, each pursuer MAV shares information with others via wireless communication based on a Zigbee component. The communication frequency of the Zigbee is 30~Hz. Although the bandwidth of Zigbee is not high, it is sufficient for the proposed STT algorithm because STT only requires a small amount of information to be shared among MAVs.
	
	4) The fourth module is formation control. Once the target's motion has been estimated, the pursuer MAVs would form a desired formation shape to follow the target.
	Fig.~\ref{fig:capture_MAV_formation} shows the desired formation shape, where the target MAV is treated as a leader and the pursuer MAVs as followers. In the desired formation shape, the three purser MAVs should be distributed evenly on a circle with the radius as $r=7$~m at a vertical height of $h=3.5$~m above the target. The horizontal angle between any two pursuer MAVs is 120 degrees. The angle subtended by any two bearing vectors pointing from the pursuers to the target is 90 degrees.
	
	The reason why the desired formation is designed in this way is twofold. First, cooperative bearing-only target localization requires certain observability conditions. Loosely speaking, the observability condition requires the pursuers to observe the target from different directions. Otherwise, if the three pursuers observe the target from the same direction (i.e., the bearings are parallel to each other), then it is impossible to locate the target. It has been proven in our previous work \cite{zhao2013optimal} that it is an optimal configuration to maximize the observability of the target when the angles between the bearings are 90~degrees. Second, since the camera carried by each pursuer is underneath its body, it is convenient for the pursuer to observe the target if this formation shape can be achieved.
	
	Finally, although we have studied distributed formation control extensively in our previous works \cite{zhao2015bearing,zhao2016localizability,zhao2019bearing}, we adopted a simple control law here to achieve the desired shape considering that formation control is not the focus in this paper. In particular, once the target's motion has been estimated, the desired position of each pursuer can be calculated respectively according to the desired formation shape. A simple waypoint tracking control law is used to control each pursuit MAV to track the desired position during flight.
	
	The experimental results are shown in Fig.~\ref{fig:experiment_result}.
	In this experiment, the target MAV is remotely controlled by a human pilot to fly randomly.
	The maximum speed and acceleration of the target reach 2.39~m/s and 1.43~$\mathrm{m}/\mathrm{s}^2$, respectively. The trajectories of the purser MAVs and the target in 3D space are shown in Fig.~\ref{fig:trajectory_experiment}.
	The estimation result of target position and velocity by the STT algorithm are given in Fig.~\ref{fig:experiment_error}.
	
	As can be seen from the experimental results, the STT algorithm can effectively estimate the target's motion. The overall closed-loop system involving detection, estimation, and control algorithms can work effectively.
	
	\begin{figure}
		\centering
		\subfloat[Flight trajectories: $\bigtriangleup$ and $\bigcirc$ represent the starting positions of the target MAV and pursuer MAVs, respectively.]{
			\label{fig:trajectory_experiment}
			\includegraphics[width=\linewidth]{ 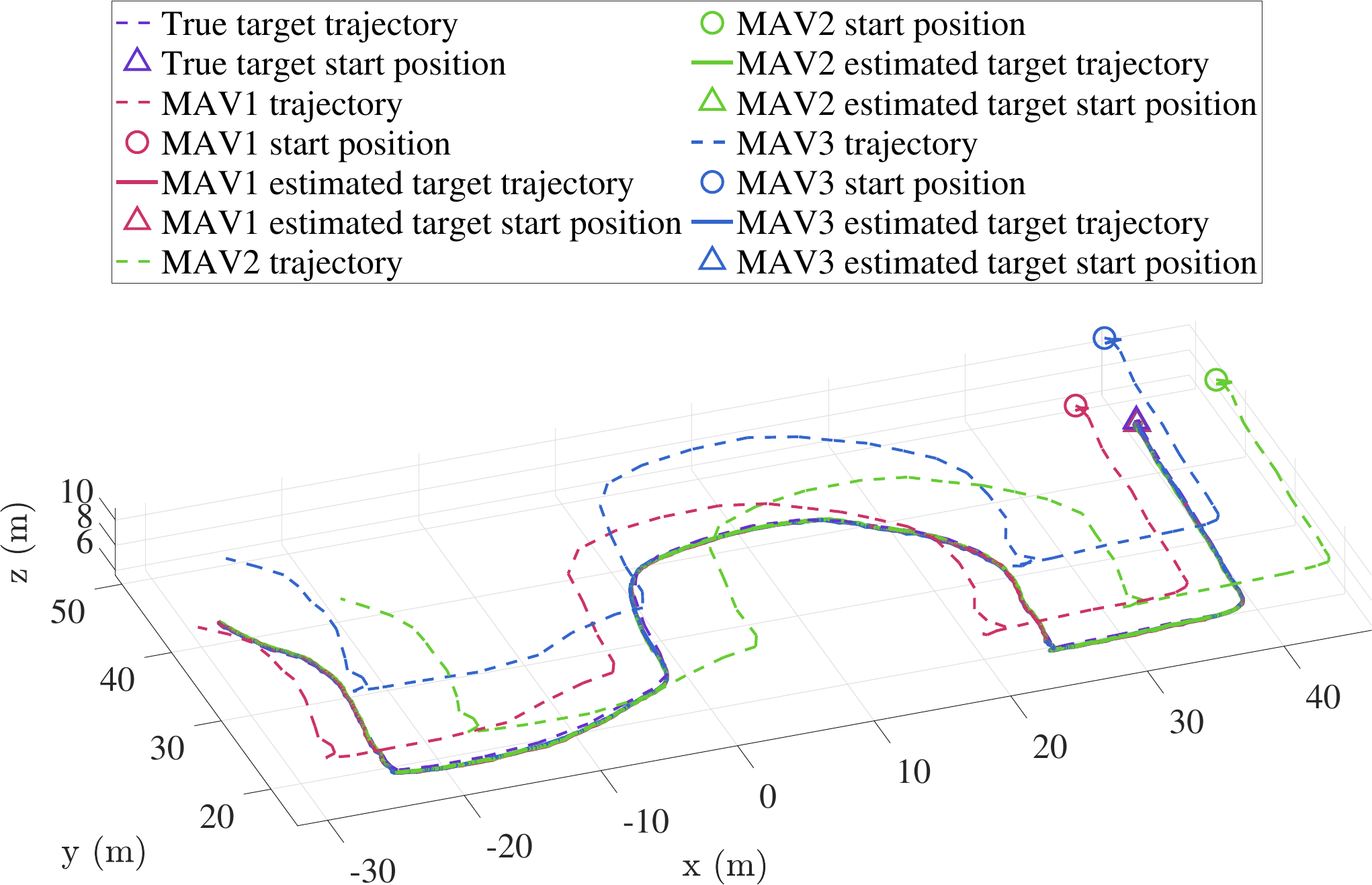}}\\
		\vspace{-0.3cm}
		\subfloat[Ground truth and estimated values of target's position and velocity]{
			\label{fig:experiment_error}
			\includegraphics[width=\linewidth]{ 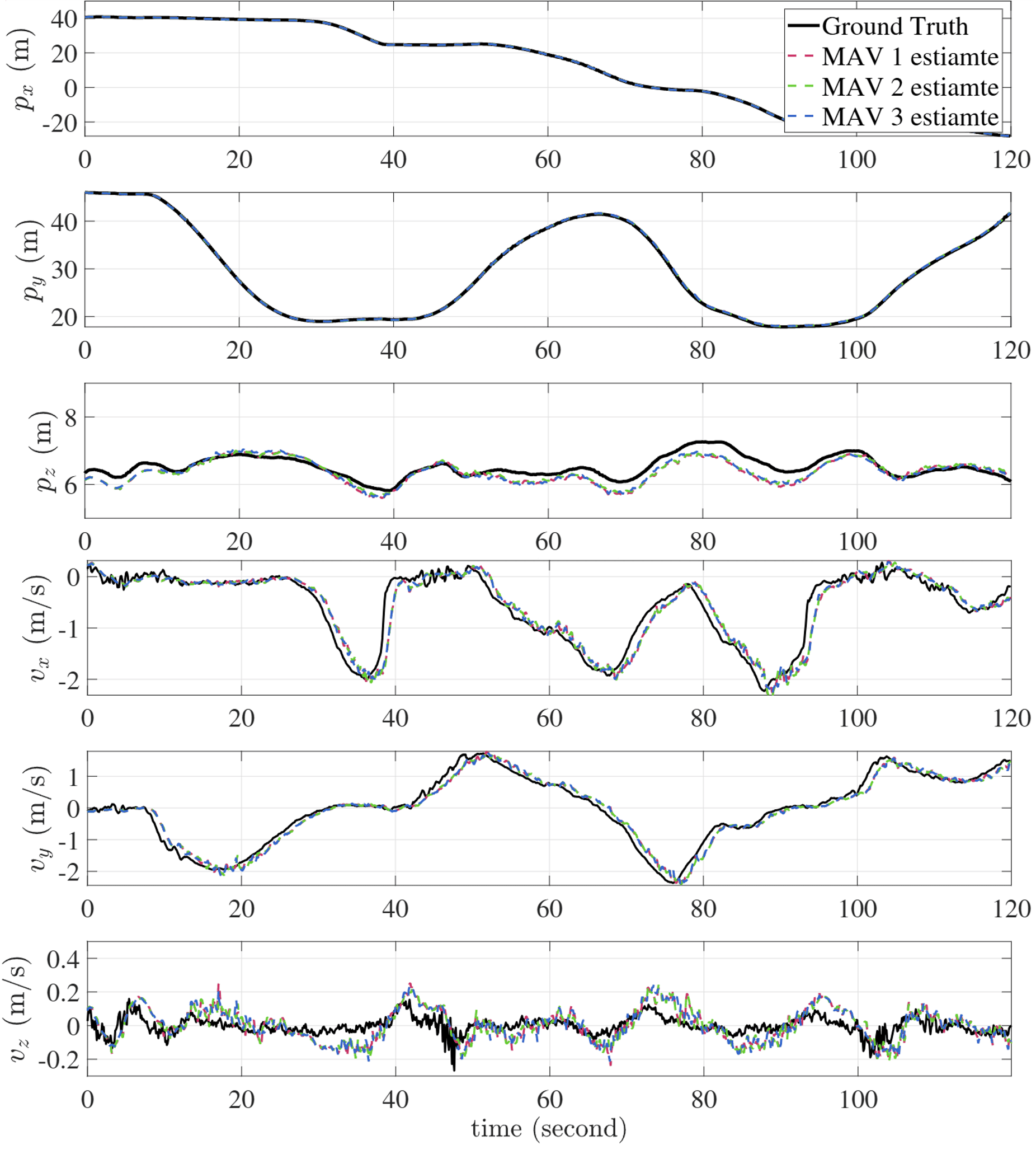}\vspace{-0.3cm}}
		\vspace{-0.3cm}
		\caption{Flight experimental results.}
		\label{fig:experiment_result}
	\end{figure}
	
	\section{Conclusion}\label{sec_conclusion}
	
	This paper proposed a new algorithm named STT for bearing-only cooperative motion estimation. Unlike the conventional algorithms designed based on DKF, STT is designed based on DRLS. Thanks to the objective function that fully incorporates all the available Information and the triangulation geometric constraints, the STT algorithm generates superior estimation performance than the existing DKFs. We also developed a real-world vision-based cooperative aerial target pursuit system to verify the effectiveness of the STT algorithm under practical conditions.
	
	\section{Appendix}\label{sec_appendix}
	
	\subsection{Proof of Lemma~\ref{lemma:exp_eta}}\label{proof:exp_eta}
	
	There are two terms in the expression of $\Eta_{i,k}$. The first term $(\sum_{t =1}^k\lambda_t^{(k)}\S_{i,t}^{(k)})^{-1} $does not contain any random variables. We thus only need to calculate the expectation of the second term $\sum_{t =1}^k\lambda_t^{(k)}(\y_{i,t}^{(k)} - \S_{i,t}^{(k)} \x_k)$.
	
	It follows from the state transition equation \eqref{eq_state_transition} that
	$\x_{k-1} = \A^{-1}\x_{k}  - \A^{-1}\mathbf{w}_{k-1}$.
	Then, for any $t=1,\dots,k$, we have $\x_{t} = \A^{t-k}\x_{k}  - \sum_{\tau=t}^{k-1}\A^{\tau - k }\mathbf{w}_{\tau}$.
	Substituting the above equation into the measurement equation \eqref{eq_pseudo_measurement_equation} gives
	\begin{align*}
  \z_{i,t} -  \H_{i,t}\A^{t-k}\x_{k}  = \boldsymbol{\nu}_{i,t} -\H_{i,t} \sum_{\tau=t}^{k-1}\A^{\tau - k}\mathbf{w}_{\tau}.
	\end{align*}
	Substituting the above equations into $(\y_{i,t}^{(k)} - \S_{i,t}^{(k)} \x_k) $
	gives
	\begin{align}
		& \y_{i,t}^{(k)} - \S_{i,t}^{(k)} \x_k = (\A^{t-k})^{T}\left[\sum_{j\in (i,\N_{i,t})}c \alpha_{ij,t}\H_{j,t}^T\R\boldsymbol{\nu}_{j,t} \right.\nonumber\\
		& \quad -\sum_{j\in (i,\N_{i,t})}c \alpha_{ij,t}\H_{j,t}^T\R\H_{j,t}\left(\sum_{\tau=t}^{k-1}\A^{\tau-k}\mathbf{w}_{\tau}\right) \nonumber\\
		&  \left.+  \mathbf{w}_{t-1}+ \sum_{j \in(i,\N_{i,t})}\beta_{ij,t}\A\Eta_{j,t-1} - \sum_{\tau=t}^{k-1}\A^{\tau-k}\mathbf{w}_{\tau}  \right].\label{eq_y_S}
	\end{align}
	Since $\E[\boldsymbol{\nu}] = 0$ and $\E[\mathbf{w}] = 0$ by assumption, taking expectation on both sides of the above equation gives
	\begin{align*}
		\E\left[\sum_{t =1}^k\left(\y_{i,t}^{(k)} - \S_{i,t}^{(k)} \x_k\right)\right]=\E[\bar{\Eta}_{i,k}],
	\end{align*}
	where $\bar{\Eta}_{i,k}$ is given by \eqref{eq_bar_eta}.
 
	\subsection{Proof of Lemma~\ref{lemma:exp_lambda_S}}\label{proof_exp_lambda_S}
 
	The matrix $\S_{i,t}^{(k)}$ in \eqref{eq_value_S_t} can be rewritten as
	\begin{align*}
		\S_{i,t}^{(k)}
		& =c (\A^{t-k})^{T}\sum_{j \in(i,\N_{i,t})}\left(\H_{j,t}^{T}\R\H_{j,t}+\I_6\right)\A^{t-k} \nonumber\\
		& = c (\A^{t-k})^{T}\begin{bmatrix}
			\frac{1}{\sigma_{\boldsymbol{\nu}}^2}\sum_{j \in(i\cup\N_{i,t})}\P_{j,t}^T\P_{j,t} & \zeros_3 \\
			\zeros_3 & \zeros_3
		\end{bmatrix}\A^{t-k} \nonumber\\
		& \qquad + (\A^{t-k})^{T}\A^{t-k}.
	\end{align*}
	Substituting the expression of $\A$ given in \eqref{eq_process_matrix_A} into the above equation yields
	\begin{align}
		\S_{i,t}^{(k)}
		& = \frac{c }{\sigma_{\boldsymbol{\nu}}^2}\begin{bmatrix}
			\I_3 & \zeros_3 \\ (t-k)\Delta t\I_3  & \I_3
		\end{bmatrix}\begin{bmatrix}
			\bar{\P}_{i,t} & \zeros_3 \\
			\zeros_3 & \zeros_3
		\end{bmatrix}\nonumber\\
		& \qquad \begin{bmatrix}
			\I_3 & (t-k)\Delta t\I_3 \\ \zeros_3 & \I_3
		\end{bmatrix} + (\A^{t-k})^{T}\A^{t-k}\nonumber\\
		& =   \frac{c }{\sigma_{\boldsymbol{\nu}}^2}\begin{bmatrix}
			1 & (t-k)\Delta t \\ (t-k)\Delta t & (t-k)^2\Delta t^2
		\end{bmatrix}\otimes\bar{\P}_{i,t} \nonumber\\
		& \qquad + (\A^{t-k})^{T}\A^{t-k}.\label{eq_S_saperate}
	\end{align}
	By defining \eqref{eq_F}, equation \eqref{eq_S_FP} can be obtained from the above equation.
	
	\subsection{Proof of Lemma~\ref{lemma:sigma_min_P_12}}\label{proof:sigma_min_P_12}
	
	For any $\g_{i,t}$ and $\g_{j,t}$, there always exists a rotation matrix $\R$ so that $\R\g_{i,t}=\g_{i,t}'=[1,0,0]^T$ and $\R\g_{j,t}=\g_{j,t}'=[\cos\theta_{ij,t},\sin\theta_{ij,t},0]^T$.
	The intuition is that $\R$ aligns $\g_{i,t}$ with the x-axis and puts $\g_{j,t}$ in the x-y plane of a new coordinate frame.
	Then, we have
	\begin{align*}
		& \smin(\P_{i,t} + \P_{j,t}) \\
		& = \smin\big[\R(\P_{i,t} + \P_{j,t})\R^T\big]\\
		& = \smin(\I_3 - \g_{i,t}'(\g_{i,t}')^T+\I_3 - \g_{j,t}'(\g_{j,t}')^T)\\
		& = \smin\left(\begin{bmatrix}
			1 -  \cos^2\theta_{ij,t} & \cos\theta_{ij,t}\sin\theta_{ij,t} \\
			\cos\theta_{ij,t}\sin\theta_{ij,t} & 2- \sin^2\theta_{ij,t} \\
		\end{bmatrix}\right).
	\end{align*}
	The singular values of the above two-by-two matrix can be obtained by solving
	\begin{align*}
		(\sigma - 1 +  \cos^2\theta_{ij,t} )&(\sigma - 2+\sin^2\theta_{ij,t}) \\
		& \quad -  \cos^2\theta_{ij,t}\sin^2\theta_{ij,t} = 0.
	\end{align*}
	We can solve the roots as $\sigma = 1\pm \cos\theta_{ij,t}\geq 0$. Then the minimum singular value is
	\begin{align*}
		\smin(\P_{i,t} + \P_{j,t}) & = \min\{1+\cos\theta_{ij,t},1-\cos\theta_{ij,t}\}.
	\end{align*}
	When $\theta_{ij,t}\in[0,\pi/2)$, we have $\smin(\P_{i,t} + \P_{j,t})=1-\cos\theta_{ij,t}$. When $\theta_{ij,t}\in(\pi/2,\pi)$, we have $\smin(\P_{i,t} + \P_{j,t})=1+\cos\theta_{ij,t}$.
	Therefore, the minimum singular value can be written in a unified expression: $\smin(\P_{i,t} + \P_{j,t}) = 1-|\cos\theta_{ij,t}|$ for any $\theta_{ij,t}\in[0,\pi)$.
	
	\subsection{Proof of Lemma~\ref{lemma:sigma_F}}\label{proof:sigma_F}
	
	First of all, we have
	\begin{align*}
		\smin\left(\sum_{t =1}^k\lambda_t^{(k)}\mathbf{F}_t^{(k)}\right)
		&\ge\smin\left(\lambda_{k}^{(k)} \mathbf{F}_k^{(k)} + \lambda_{k-1}^{(k)} \mathbf{F}_{k-1}^{(k)}\right).
	\end{align*}
	Since $\lambda_{k}^{(k)}\geq\lambda_{k-1}^{(k)}$, we further have
	\begin{align}
		\smin\left(\sum_{t =1}^k\lambda_t^{(k)}\mathbf{F}_t^{(k)}\right)
		& \geq \lambda_{k-1}^{(k)}\smin\left(\mathbf{F}_{k}^{(k)} + \mathbf{F}_{k-1}^{(k)}\right)\label{eq_J_F}
	\end{align}
	where
    \vspace{-0.3cm}
	\begin{align*}
		\mathbf{F}_{k}^{(k)} + \mathbf{F}_{k-1}^{(k)} = \begin{bmatrix}
			2& -\Delta t \\ -\Delta t & \Delta t^2
		\end{bmatrix}.
	\end{align*}
	The singular value of $(\mathbf{F}_{k}^{(k)} + \mathbf{F}_{k-1}^{(k)})$ can be obtained by solving $(x - 2)(x-\Delta t^2) - \Delta t^2 = 0$. The roots are
	\begin{align}
		x = \frac{2+\Delta t^2\pm \sqrt{4+\Delta t^4}}{2}\label{eq_root_lambda}.
	\end{align}
	 Substituting the minimum root in \eqref{eq_root_lambda} to \eqref{eq_J_F}
	, the proof is complete.
	
	\subsection{Proof of Lemma~\ref{lemma:S_inv}}\label{proof:S_inv}

	Denote $\P_0 = \alpha_{0}(1-\cos\uptheta_{0})\I_3\in \mathbb{R}^{3\times 3}$. For any $\x$, we have $\x^T(\bar{ \P}_{i,t}  - \P_0) \x\geq\x^T\bar{ \P}_{i,t}\x  -\x \P_0\x  = \x^T\bar{ \P}_{i,t}\x  - \alpha_{0}(1-\cos\uptheta_{0}) \ge \smin(\bar{ \P}_{i,t})- \alpha_{0}(1-\cos\uptheta_{0})\ge0$, where the last inequality is due to Lemma~\ref{lemma:sigma_min_bar_P}.
	Therefore, we know that $(\bar{ \P}_{i,t}  - \P_0)$ is positive semi-definite. Since $\mathbf{F}_t^{(k)}$ is also positive semi-definite, we have
	\begin{align*}
		&\sum_{t =1}^k\lambda_t^{(k)} \mathbf{F}_t^{(k)}\otimes\left( \bar{\P}_{i,t} -  \P_0\right) \geq 0\\
  \Rightarrow&
		\sum_{t =1}^k\lambda_t^{(k)} \mathbf{F}_t^{(k)}\otimes\bar{\P}_{i,t} \geq \sum_{t =1}^k\lambda_t^{(k)} \mathbf{F}_t^{(k)}\otimes\P_0.
	\end{align*}
	It can be further obtained that
	\begin{align*}
		& \smin\left(\sum_{t =1}^k\lambda_t^{(k)} \mathbf{F}_t^{(k)}\otimes\bar{\P}_{i,t} \right)\\
		& \qquad \ge \smin\left[\left(\sum_{t =1}^k\lambda_t^{(k)} \mathbf{F}_t^{(k)}\right)\otimes\P_0\right]\\
		& \qquad= \smin\left(\sum_{t =1}^k\lambda_t^{(k)} \mathbf{F}_t^{(k)}\right)\smin(\P_0)\\
		& \qquad \geq \lambda_{k-1}^{(k)}f(\Delta t)\upalpha_{0}(1-\cos\uptheta_{0}),
	\end{align*}
	where the last inequality is due to Lemma~\ref{lemma:sigma_F} and $\smin(\P_0)=\alpha_{0}(1-\cos\uptheta_{0})$.
	It then follows from \eqref{eq_S_FP} that
	\begin{align}
		& \smin\left(\sum_{t =1}^k\lambda_t^{(k)}\S_{i,t}^{(k)}\right) \nonumber\\
		&  \quad \geq \smin  \left(\lambda_k^{(k)}(\A^{k-k})^{T}\A^{k-k}\right)\nonumber\\
		& \qquad  + \frac{c }{\sigma_{\boldsymbol{\nu}}^2} \smin \left(\sum_{t =1}^k\lambda_t^{(k)} \mathbf{F}_t^{(k)}\otimes \bar{\P}_{j,t}\right)\nonumber\\
		& \quad = \lambda_{k}^{(k)} + \frac{c }{\sigma_{\boldsymbol{\nu}}^2}  \lambda_{k-1}^{(k)}f(\Delta t)\upalpha_{0}(1-\cos\uptheta_{0})
		.\label{eq_sigma_min_S}
	\end{align}
	If
	\begin{align*}
		c  & \geq  \frac{(1-\lambda_k^{(k)})\sigma_{\boldsymbol{\nu}}^2}{\lambda_{k-1}^{(k)}f(\Delta t)\upalpha_{0} (1-\cos\uptheta_{0})}\\
		& = \frac{(\|\A\|(1+\gamma_1)-1)\|\A\|(1+\gamma_1)\sigma_{\boldsymbol{\nu}}^2}{\gamma_2f(\Delta t)\upalpha_{0} (1-\cos\uptheta_{0})},
	\end{align*}
	then substituting $c$, given in \eqref{eq_c}, into \eqref{eq_sigma_min_S} yields
	\begin{align*}
		& \smin	\left(\sum_{t =1}^k\lambda_t^{(k)}\S_{i,t}^{(k)}\right) \\
		& \quad \geq \lambda_{k}^{(k)} 
		+ \frac{c }{\sigma_{\boldsymbol{\nu}}^2}  \lambda_{k-1}^{(k)}f(\Delta t	)\upalpha_{0}(1-\cos\uptheta_{0})  \geq 1.
	\end{align*}
	The proof is complete.
	
	\bibliographystyle{IEEEtran}
	\bibliography{manuscript}
\end{document}